\documentclass[preprint,12pt]{elsarticle}

\usepackage[utf8]{inputenc}

\usepackage[title]{appendix}
\usepackage{float}

\usepackage{amsmath,amsfonts,amsthm,amssymb,mathtools,bbm,bm}
\usepackage{mathtools}
\usepackage{mathrsfs}
\usepackage{amsthm}
\newcommand{\real}{\mathbbm{R}}
\newcommand{\interval}{\mathbbm{I}\mathbbm{R}}

\newtheorem{definition}{Definition}

\newtheorem{theorem}{Theorem}

\newtheorem{corollary}{Corollary}
\newtheorem{problem}{Problem}

\usepackage{graphicx}
\usepackage{caption}
\usepackage{subcaption}
\usepackage{tikz}
\usepackage[mode=buildnew]{standalone}

\usepackage{multirow}

\usepackage{algorithm}
\usepackage{algpseudocode}
\algrenewcommand\algorithmicrequire{\textbf{Input:}}
\algrenewcommand\algorithmicensure{\textbf{Output:}}

\usepackage{url}
\usepackage{xr-hyper}

\usepackage{hyperref}
\usepackage{footmisc}

\usepackage{comment}

\usepackage{siunitx}

\journal{Neural Networks}

\makeatletter
\renewcommand*\env@matrix[1][*\c@MaxMatrixCols c]{%
	\hskip -\arraycolsep
	\let\@ifnextchar\new@ifnextchar
	\array{#1}}
	\makeatother

\bibliographystyle{elsarticle-num}

\begin{document}

\begin{frontmatter}

\title{Towards Global Neural Network Abstractions with Locally-Exact Reconstruction}

\author[UoM]{Edoardo Manino\corref{CorAuth}}
\ead{edoardo.manino@manchester.ac.uk}

\author[UFAM]{Iury~Bessa}
\ead{iurybessa@ufam.edu.br}

\author[UoM]{Lucas Cordeiro}
\ead{lucas.cordeiro@manchester.ac.uk
}

\address[UoM]{University of Manchester, Department of Computer Science,\\ Oxford Road, Manchester M13 9PL, United Kingdom}

\address[UFAM]{Federal University of Amazonas, Department of Electricity,\\ Avenida General Rodrigo Octavio Jordão Ramos, 1200 - Coroado I,\\ Manaus - AM, 69067-005, Brazil}

\cortext[CorAuth]{Corresponding author}

\begin{abstract}
Neural networks are a powerful class of non-linear functions. However, their black-box nature makes it difficult to explain their behaviour and certify their safety. Abstraction techniques address this challenge by transforming the neural network into a simpler, over-approximated function. Unfortunately, existing abstraction techniques are slack, which limits their applicability to small local regions of the input domain. In this paper, we propose Global Interval Neural Network Abstractions with Center-Exact Reconstruction (GINNACER). Our novel abstraction technique produces sound over-approximation bounds over the whole input domain while guaranteeing exact reconstructions for any given local input. Our experiments show that GINNACER is several orders of magnitude tighter than state-of-the-art global abstraction techniques, while being competitive with local ones.
\end{abstract}

\begin{keyword}
Neural Networks, Abstract Interpretation, Global Abstraction.
\end{keyword}  

\end{frontmatter}

\section{Introduction}

\sloppy

Neural networks have been applied to a large number of predictive tasks. The main reason for their versatility is their ability to approximate a large class of nonlinear functions~\citep{Hornik1989,Sonoda2017}. However, at the same time, trained neural networks tend to have a black-box nature, which limits our ability to explain their behaviour~\citep{Adadi2018}. More worryingly, even when we have a priori formal expectations on their behaviour, e.g., robustness to input noise, the act of certifying whether a neural network satisfies our requirements comes at a very high computational cost~\citep{Katz2017,Wang2022b}.

A common strategy in dealing with the complexity of analysing neural networks is transforming them into equivalent models described by better-behaved classes of functions. Examples from theoretical works include max-affine splines~\citep{Balestriero2021}, residual networks with one neuron per layer~\citep{Lin2018}, prototypical ``sawtooth'' rectified linear unit (ReLU) networks~\citep{Telgarsky2015,Yarotsky2017} and Radon or ridgelet integral representations~\citep{Carroll1989,Petrosyan2020}. However, the purpose of these transformations is to prove properties over whole classes of network architectures, e.g., the existence of approximating networks with given error bounds, rather than providing a constructive algorithm that outputs such a network.

In stark contrast with these works, there exist a variety of approximate techniques that can compress the size of a given neural network. Pruning techniques can remove network parameters whose contribution is negligible~\citep{Liang2021,Gunnet2018}. Distillation techniques train a simpler network to approximate the output of the hidden layers of the original network~\citep{Wang2022}. Quantisation techniques reduce the accuracy of arithmetic operations to take advantage of hardware acceleration~\citep{Hubara2017}. Frequently, these techniques are combined to minimise the computational cost of executing a neural network in terms of power, latency or memory~\citep{Han2015,Polino2018}. However, they offer no formal guarantees on the approximation errors introduced during the compression process.

Ideally, we would like to compress any given neural network while keeping formal guarantees on the original model. A recent development in such direction is lossless compression~\citep{Serra2020,Serra2021,Sourek2021}. The algorithms in this family can identify which ReLU activation functions in the network are always active or inactive and remove them accordingly. Unfortunately, lossless algorithms can only produce equivalent networks; thus, their compression scope is somewhat limited~\citep{Kumar2019}.

More promising techniques can be grouped under the umbrella term of \textit{abstractions}~\citep{Gehr2018,Mirman2018}. An abstraction is a set-valued mapping, i.e., a function whose output is a set instead of a point. In particular, this output set is guaranteed to be an over-approximation of the original and more complex function. More formally, for a given abstraction's domain of validity, the image of the original function is always within the abstraction's output set~\citep{Cousot1992}. Depending on that domain of validity, we discriminate between \textit{local} and \textit{global} abstractions.

On the one hand, local abstractions are valid for a small region of the input domain only~\citep{Weng2018,Zhang2018,Salman2019,Singh2019}. As such, they are suitable for verifying the robustness of neural networks to local perturbations. Typically, their design is neuron-centric: each activation function is over-approximated by bounding its output between a set of linear~\citep{Paulsen2022} or quadratic constraints~\citep{Zhang2018}. Still, these abstractions' main goal is not to reduce the network complexity but to encode it into a set of constraints that a solver can handle.

On the other hand, global abstractions are valid for any input~\citep{Prabhakar2019,Elboher2020}. In this regard, Prabhakar and Afzal~\citep{Prabhakar2019} propose to transform the original neural network so that it can operate in interval arithmetic \citep{Jaulin2001} and maintain an upper and lower bound on the value of each neuron. Furthermore, by over-approximating the individual intervals, the authors can merge similar neurons and reduce the overall complexity of the global abstraction function. Similarly, Elboher \textit{et al.}~\citep{Elboher2020} introduce a global abstraction algorithm that creates multiple copies of each neuron, and then merge them together. The advantage of their algorithm is that the abstraction function is itself another neural network, which grants a high degree of integration with existing verification tools~\citep{Katz2019}.

A significant drawback of local and global abstraction techniques is their slackness~\citep{Bak2020,Elboher2020}. In general, there is a trade-off between the complexity of the abstracted neural network and the size of the over-approximation margin it produces. In practice, even abstracting a few neurons may generate large output sets that become meaningless. This phenomenon is not surprising: established techniques from control theory, like Taylor approximations~\citep{Ivanov2021} and Bernstein polynomials~\citep{Huang2019}, are known to suffer from the same over-approximation explosion when applied to highly non-linear functions.

This paper proposes an algorithm to compute Global Interval Neural Network Abstractions with Center-Exact Reconstruction (GINNACER). Our approach to the complexity-approximation trade-off is novel: we focus on guaranteeing exact reconstruction for a given input centroid while keeping sound over-approximation bounds elsewhere. This way, we obtain tighter global abstractions in a specific local region of the input. Empirically, our GINNACER abstractions are orders of magnitude tighter than state-of-the-art global abstractions while on par with local abstractions.

More specifically, our contributions are the following:
\begin{itemize}
    \item We introduce the Inactive Canonical Form (ICF), an exact transformation of ReLU networks where all activation functions are inactive for a specific input centroid.
    \item We introduce a layer-wise abstraction that can handle interval inputs and outputs and uses fewer ReLU activations than the original layer.
    \item We introduce the GINNACER algorithm, which computes an individual abstraction for each layer of the network. The abstracted layers are aligned so that their output intervals have size zero for a specific input centroid.
    \item We compare GINNACER with state-of-the-art global and local abstractions. Our comparison is not dependent on a downstream task (e.g. robustness verification) like in previous studies but measures the quality of the abstraction in isolation.
\end{itemize}

The rest of this paper is organized as follows. In Section \ref{sec:related_work} we clarify the differences between GINNACER and the existing literature. In Section \ref{sec:preliminaries} we introduce a motivating example and formally state the GINNACER requirements. In Section \ref{sec:algorithm}, we present our GINNACER algorithm and provide proof of its theoretical properties. In Section \ref{sec:experiments}, we compare GINNACER against the state-of-the-art and study its empirical properties. Finally, in Section \ref{sec:discussion}, we discuss the potential uses of GINNACER, the connections with related research fields and a roadmap for future work. The data, neural networks and code of our experiments is freely available at \citep{Manino2022}.

\section{Related Work}
\label{sec:related_work}

Due to the functional complexity of modern neural networks, there have been many efforts to cast them to a simpler class of functions. The purpose and application of these neural transformation methods are very different in nature. Here, we propose a high-level taxonomy that focuses on the mathematical relationship between the original neural network $f$ and its transformed version $g$.

First, we have neural network approximations, where $f(x)\approx g(x)$. In this category, neural network compression methods are the most prominent~\citep{Han2015,Polino2018}. Their goal is to reduce the computational requirements of the neural network at inference time. These methods include pruning individual weights or neurons~\citep{Liang2021,Gunnet2018}, executing the neural network in integer arithmetic or any similar quantized representation~\citep{Hubara2017}, or training a smaller network to mimic the hidden representations of the original one~\citep{Wang2022}. A more recent development in the approximation category comes from privacy-preserving machine learning, where it has been proposed to approximate the non-linear activation functions of a neural network with high-degree polynomials~\citep{Obla2020,Lee2022}. This approximation allows inference on private inputs via homomorphic encryption. In general, neural network approximations give no guarantees on the approximation error and are usually evaluated in terms of loss of predictive accuracy concerning the original network.

Second, we have equivalent transformations, where $f(x)=g(x),\forall x$. On the one hand, some of these efforts are theoretical in nature, and their purpose is proving some general mathematical properties of neural networks~\citep{Balestriero2021}. On the other hand, there have been efforts towards constructing lossless compression algorithms for neural networks~\citep{Serra2020,Serra2021,Sourek2021}. These algorithms can identify the neurons whose activation function state is either active or inactive for all possible inputs $x$. Then, these neurons can be safely replaced with a linear function or removed altogether. However, only a minority of the neurons in the original network can be processed in this way.

Third, we have global functional abstractions, where $\underline{g}(x)\leq f(x)\leq\overline{g}(x),\forall x$. The main challenge is maintaining a valid upper and lower bound for any $x$ in the input domain. Prabhakar \textit{et al.} \citep{Prabhakar2019} achieve this by allowing the entire neural network to operate in interval arithmetic. Instead, Elboher \textit{et al.} \citep{Elboher2020} propose to keep explicit copies of each neuron that carry the upper and lower bound information. Both methods can merge neurons with similar weights, thus yielding a pair of functions $\underline{g},\overline{g}$ that contains fewer non-linearities. However, the functions $\underline{g},\overline{g}$ produced by the aforementioned methods tend to be quite slack, as we show in our experiments (see Sections~\ref{sec:motivating_example} and \ref{sec:empirical_comparison}). In contrast, GINNACER yields tighter abstractions while maintaining global soundness.

Fourth, we have local functional abstractions, where $\underline{g}(x)\leq f(x)\leq\overline{g}(x)$ for a small sub-domain $x\in\mathcal{D}$. These have the advantage that they only need to represent a small portion of the original function $f$. Traditionally, non-linear systems have been analysed with Taylor models~\citep{Makino2003}, which give a local Taylor polynomial approximation and a corresponding worst error interval. This corpus of techniques has been applied to neural networks with a fair degree of success~\citep{Ivanov2021,Huang2022}. In contrast, more recent advances in local functional abstractions for neural networks have been driven by linear techniques. For example, Weng \textit{et al.} \citep{Weng2018} propose FastLin, which computes parallel bounds for each non-linear activation function in the network. Similarly, Zhang \textit{et al.} \citep{Zhang2018} propose CROWN, where the linear bounds are optimized to reduce the over-approximation margin. Crucially, this line of work can be generalised to arbitrary computation graphs, as demonstrated in~\citep{Xu2020}. When paired with branch-and-bound techniques, linear local functional abstractions represent the state-of-the-art in robustness verification of neural networks~\citep{Muller2022}.

Fifth, we have set-based abstractions, where $f(x)\in g(\mathcal{D})$ for a small sub-domain $x\in\mathcal{D}$, and the input-output relation for each specific $x\in\mathcal{D}$ is lost. The main challenge in this category is defining an efficient and flexible  representation of a set. In this regard, \citep{Singh2018} uses linear zonotopes, \citep{Kochdumper2021} uses sparse polynomial zonotopes, and \citep{Tran2019,Bak2021} uses star sets. Alternatively, any arbitrary convex relaxation of the non-linear activation functions can be employed \citep{Salman2019}. For instance, DeepPoly collects linear constraints for each neuron while executing the neural network symbolically and computes the output set $g(\mathcal{D})$ by solving a linear programming problem~\citep{Singh2019b}. Similarly, the PRIMA method represents the convex hull of multiple neurons at the same time via linear constraints, thus reducing the over-approximation margin \citep{Muller2022b}.

\section{Preliminaries}
\label{sec:preliminaries}

Before introducing our contribution, let us clarify our research aims with a motivating example and a formal problem statement.

\subsection{Motivating Example}
\label{sec:motivating_example}

For each input, a neural network abstraction produces a set that is guaranteed to contain the output of the original network. To see what that looks like in practice, we train a neural network and compare its output with our GINNACER abstractions and several state-of-the-art alternatives.

More specifically, we train a fully-connected feedforward network $f(x)$ with two hidden layers of $64$ neurons each and ReLU activation functions. We construct the dataset by uniformly sampling the following polynomial in the range $x\in[-10,10]$ with a sampling step of $0.1$ ($201$ samples in total):
\begin{equation}
\label{eq:pade_jacoby_poly}
    y=\frac{- 0.035x^5 - 0.12x^3 + x  }{0.021x^6 - 0.10x^4 + 0.55x^2 + 1}
\end{equation}
the result is depicted in Figure \ref{fig:abs_comp} (black line). Note that Subfigures \ref{fig:global_comp} and \ref{fig:local_comp} present the same neural network $f(x)$ with different rescaling of the $y$ axis.

\begin{figure}[t]
\centering
    \begin{subfigure}[b]{0.49\textwidth}
    \centering
        \includegraphics[width=\textwidth]{./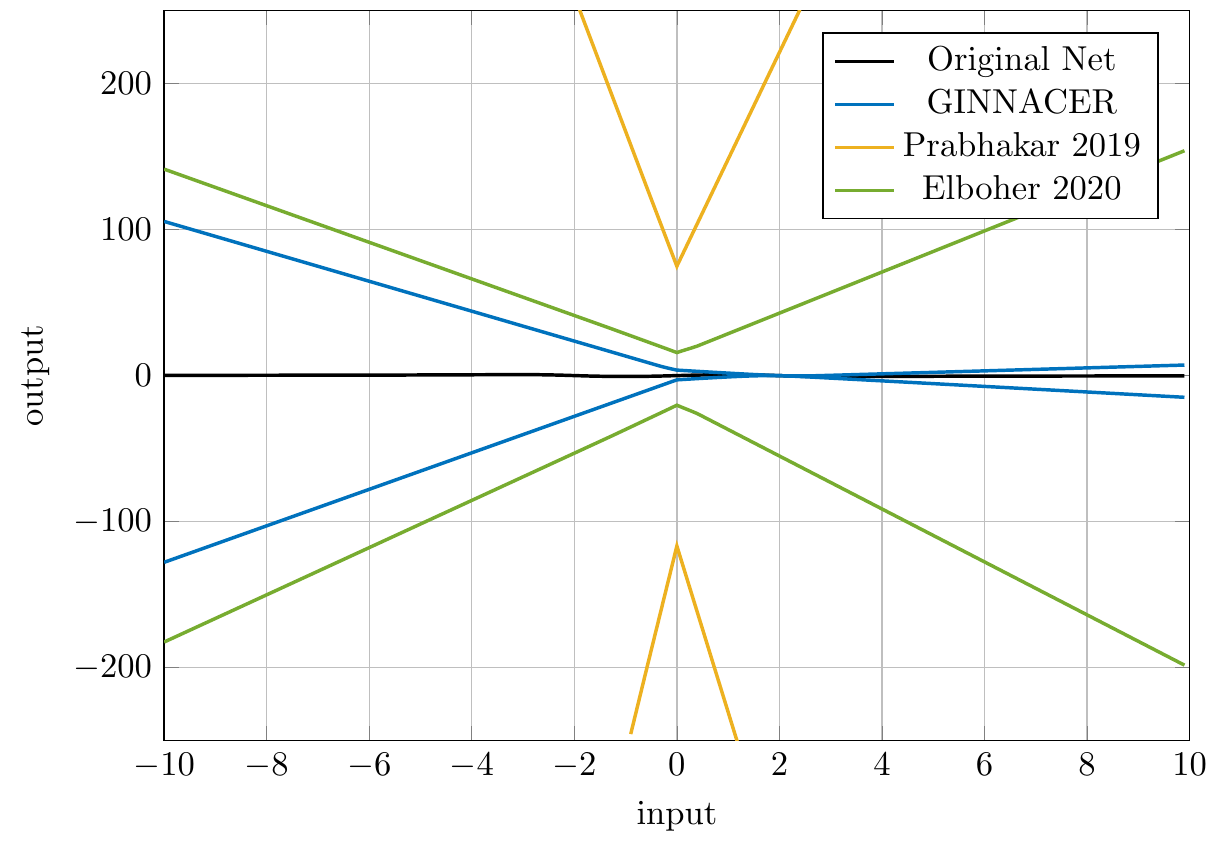}
        \caption{Global abstractions (zoomed out)}
        \label{fig:global_comp}
    \end{subfigure}
    \begin{subfigure}[b]{0.49\textwidth}
    \centering
        \includegraphics[width=\textwidth]{./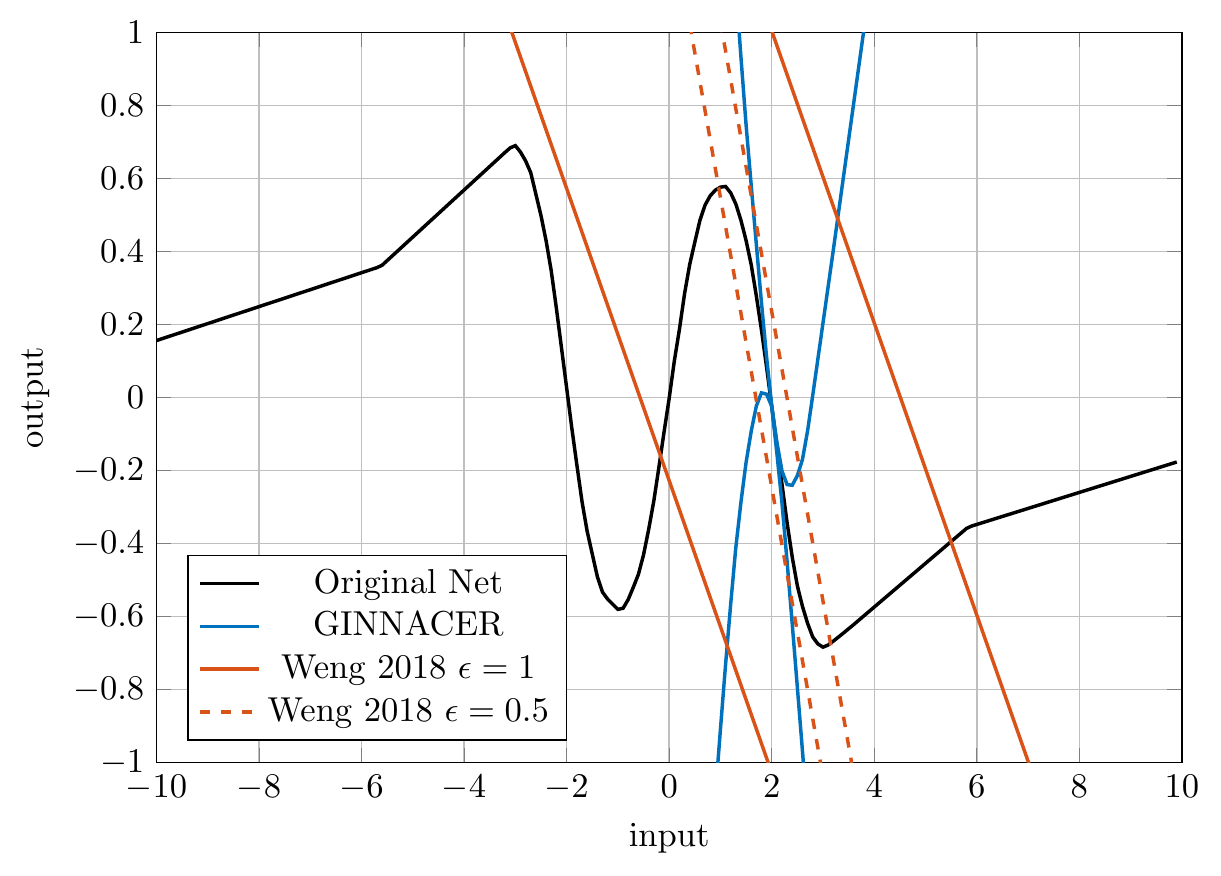}
        \caption{Local abstractions (zoomed in)}
        \label{fig:local_comp}
    \end{subfigure}
\caption{Comparison between global and local abstraction methods.}
\label{fig:abs_comp}
\end{figure}

State-of-the-art \textit{global} abstraction techniques \cite{Prabhakar2019,Elboher2020} produce upper and lower bounds on the output of $f(x)$ for all $x$ (more details on these algorithms in Section~\ref{sec:list_of_sota}), but introduce a large over-approximation margin in doing so (see Subfigure~\ref{fig:global_comp}). Conversely, the bounds of \textit{local} abstraction techniques \cite{Weng2018} tend to be tighter, but are valid for a small input range $x\in[c-\epsilon,c+\epsilon]$ only (in Subfigure \ref{fig:local_comp} we have $c=2$).

The proposed GINNACER algorithm described in Section~\ref{sec:algorithm} combines the best of both worlds: global abstraction guarantees and exact reconstruction around a centre point $c$. Furthermore, the GINNACER algorithm reduces the network's complexity, which we measure as the number of ReLU activation functions used in the abstracted network. More specifically, all global abstraction in Figure \ref{fig:abs_comp} use only 9 out of 64 ReLUs for the first layer and 55 out of 64 for the second.\footnote{The local abstraction by Weng \textit{et al.}~\citep{Weng2018} is linear (see Section \ref{sec:list_of_sota} for details).}

\subsection{Definitions and Notation}
\label{sec:definitions}

\paragraph{Notation.} $\mathbb{R}^{d}$ denotes the $d$-dimensional Euclidean space, and $\mathbb{R}^{d_1 \times d_2}$ is the set of all $d_1\times d_2$  real matrices. For two vectors $x_1, x_2 \in \mathbb{R}^d$ or matrices $A_1, A_2 \in \mathbb{R}^{d_1\times d_2}$, the relations $x_1 \leq x_2$ and $A_1 \leq A_2$ are understood elementwise.  $\interval^d$ denotes the cartesian product of the $d$ sets of closed intervals, i.e.,  $\interval\equiv\{[\underline{z},\overline{z}]:\underline{z},\overline{z}\in\real^{d}\land \underline{z}\leq \overline{z}\}$. For a matrix $A\in \mathbb{R}^{d_1\times d_2}$, we define $A^{+}=\max{\left\lbrace A, 0 \right\rbrace}$ and $A^{-}=\min{\left\lbrace A, 0 \right\rbrace}$ element-wise. Furthermore, we represent vectors as columns.

Let us now formalize the concepts presented in our motivating example.

\begin{definition}[ReLU Activation Function]
    The Rectified Linear Unit (ReLU) is a piece-wise linear activation function $\sigma:\real\to\real^+$, where $\sigma(z)=\max(z,0)$.
\end{definition}

\begin{definition}[Fully-Connected Layer]
    Let $y^{\ell}\in\real^{n^{\ell}}$ and $x^{\ell}\in\real^{n^{\ell}}$ be the pre-activation and post-activation vectors at layer $\ell$, where for each neuron $i\in[1,n^{\ell}]$ we have $x^{\ell}_i=\sigma(y^{\ell}_i)$, with activation function $\sigma$. The pre-activation values depend on the previous layer via a linear mapping $y^{\ell}=W^{\ell}x^{\ell-1}+b^{\ell}$, with weight matrix $W\in\real^{n^{\ell}\times n^{\ell-1}}$ and bias vector $b^{\ell}\in\real^{n^{\ell}}$.
\end{definition}

\begin{definition}[Feedforward Neural Network]
    A feedforward neural network $f(x^0)$ is the composition of $m$ fully-connected (hidden) layers and a final linear output layer:
    \begin{equation}
    \label{eq:feedforward_nn}
        y^m=f(x^0)=W^m\sigma(W^{m-1}\sigma(\dots\sigma(W^1x^0+b^1))+b^{m-1})+b^m
    \end{equation}
    where $x^0\in\real^{n^0}$ is the network input vector, $y^m\in\real^{n^m}$ the output vector, and we use $\sigma(v)$ as an element-wise operator on vector $v$, with some abuse of notation.
\end{definition}

\begin{definition}[Interval Abstraction]
\label{def:interval_abstraction}
    Let $f:\real^{d_1}\to\real^{d_2}$ be a vector-valued function defined on $d_1$ real inputs and $d_2$ real outputs. Let $g:\real^{d_1}\to\interval^{d_2}$ be a set-valued function that maps $d_1$ real inputs to closed intervals of $\interval^{d_2}$. We say $g$ is an interval abstraction of $f$ if $f(x)\in g(x)$ for all $x\in\mathscr{D}\subseteq \real^{d_1}$, where $\mathscr{D}$ is the domain of validity of the abstraction. The abstraction $g$ is said to be global if $\mathscr{D}= \real^{d_1}$ 
\end{definition}

\subsection{Problem Statement}
\label{sec:problem_statement}

The challenge of computing effective neural network abstractions lies in balancing two opposite requirements: simplicity of the abstracted function and minimal size of its output set. To solve this problem we propose the GINNACER for representing neural networks. The definition of GINNACER is given as follows.

\begin{definition}[GINNACER: Global Interval Neural Network Abstraction with Center-Exact Reconstruction]
\label{def:ginnacer}
    Given a neural network $f:\real^{n^0}\to\real^{n^m}$ with $a_f$ non-linear activation functions, we say that $g:\real^{n^0}\to\interval^{n^m}$ is the GINNACER of $f$ if it satisfies the following properties:
    \begin{enumerate}
        \item \textbf{Global Soundness.} $f(x)\in g(x)$ for any $x\in\real^{n^0}$.
        \item \textbf{Reduced Non-Linearity.} $g$ is a neural network itself with $a_g\leq a_f$ non-linear activation functions.
        \item \textbf{Center-Exact Reconstruction.} There exist an input $x_c^0\in\real^{n^0}$ such that $g(x_c^0)=\{f(x_c^0)\}$, i.e. the output set contains only $f(x_c^0)$.
    \end{enumerate}
\end{definition}

Therefore, this paper aims to solve the following problem.

\begin{problem}
    Given a neural network $f:\real^{n^0}\to\real^{n^m}$ with $a_f$ non-linear activation functions, find a GINNACER of $f$ around input $x^0_c\in\real^{n^0}$.
\end{problem}

In Section \ref{sec:algorithm}, we give an algorithm to compute the GINNACER of a neural network $f$ when $f$ is a fully-connected feedforward neural network with ReLU activation functions. Among the existing literature, the algorithm in \cite{Elboher2020} is also restricted to this setting. We plan to address different network architectures in our future work.

\section{The GINNACER Algorithm}
\label{sec:algorithm}

Here, we introduce an algorithm that satisfies the GINNACER requirements of Definition~\ref{def:ginnacer}. Our algorithm computes interval abstractions $g^{\ell}:\interval^{n^{\ell-1}}\to\interval^{n^{\ell}}$ for all layer of the original neural network $f$, and composes them into a full network abstraction:
\begin{equation}
\label{eq:abstract_composition}
    g^{\mathrm{full}}
    \equiv g^1\circ g^2 \circ\dots\circ g^{m-1}\circ g^m
\end{equation}
where plugging degenerate intervals as the input, i.e. $g([x,x])$, recovers the notation in Definition \ref{def:interval_abstraction}.

In this respect, our GINNACER algorithm maintains a sound interval over the value of each post-activation vector, i.e. $x_i^{\ell}\in[\underline{x}_i^{\ell},\overline{x}_i^{\ell}]$. As such, we choose to present our algorithm as a sequence of transformations from the original neural layers $x^{\ell}=\sigma(W^{\ell}x^{\ell-1}+b^{\ell})$ to their abstractions $g^{\ell}$. We address the challenge of composing different abstracted layers together in Section~\ref{sec:full_abstraction}, and we give the pseudocode of the full GINNACER algorithm in Section \ref{sec:algo_code}.

\subsection{Representing Negative Inputs}
\label{sec:neg_input_trick}

Later in Section~\ref{sec:clustering}, we need the assumption that the inputs $x^{\ell-1}$ of all layers $\ell$ are non-negative. Such assumption is clearly satisfied for all layers where $x^{\ell-1}_i=\sigma(y^{\ell-1}_i)$ and $\sigma$ is the ReLU function. At the same time, the inputs $x^0$ of the first hidden layer $\ell=1$ can take any value in general, including negative ones. In order to apply our abstraction technique to the first layer of the neural network, we propose a lossless transformation of the input layer as follows.

\begin{definition}[Negative Input Pre-Layers]
\label{def:pre_layers}
    Let $\ell=0$ be an additional layer with the following weights and biases:
    \begin{equation}
        W^0_{\mathrm{pre}}
        =\begin{bmatrix}I\\-I\end{bmatrix}
        \in\real^{2n^0\times n^0}
        \qquad\text{and}\qquad
        b^0_{\mathrm{pre}}=\begin{bmatrix}0\\\vdots\\0\end{bmatrix}\in\real^{2n^0}
    \end{equation}
    where $n^0$ is the dimension of the neural network inputs. Furthermore, assume that the weights $W^1$ and biases $b^1$ of layer $\ell=1$ are modified as follows:
    \begin{equation}
        W^1_{\mathrm{pre}}
        =\begin{bmatrix}W^1&-W^1\end{bmatrix}
        \in\real^{n^1\times 2n^0}
        \qquad\text{and}\qquad
        b^1_{\mathrm{pre}}=b^1
    \end{equation}
    We define the post-activation values of these two pre-layers as follows:
    \begin{equation}
        x^0_{\mathrm{pre}}=\sigma(W^0_{\mathrm{pre}}x^{-1}+b^0_{\mathrm{pre}})
        \qquad\text{and}\qquad
        x^1_{\mathrm{pre}}=\sigma(W^1_{\mathrm{pre}}x^0_{\mathrm{pre}}+b^1_{\mathrm{pre}})
    \end{equation}
    where the input to the modified pre-layers $x^{-1}$ has the same dimensionality as the original network input $x^{0}$.
\end{definition}

\begin{theorem}
    The post-activation vector $x^1_{\mathrm{pre}}$ of the pre-layers is identical to the original post-activations $x^1$, when the respective inputs $x^{-1}=x^0$ are identical.
\end{theorem}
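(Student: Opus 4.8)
The plan is to evaluate the two pre-layers in sequence and show that their composition collapses to the original map $x^0\mapsto\sigma(W^1x^0+b^1)$. First I would expand the first pre-layer: since $b^0_{\mathrm{pre}}=0$ and $W^0_{\mathrm{pre}}=\begin{bmatrix}I\\-I\end{bmatrix}$, and since $\sigma$ acts element-wise and hence block-wise,
\[
    x^0_{\mathrm{pre}}=\sigma\!\left(\begin{bmatrix}x^{-1}\\-x^{-1}\end{bmatrix}\right)=\begin{bmatrix}\sigma(x^{-1})\\\sigma(-x^{-1})\end{bmatrix}.
\]
The key observation is the element-wise identity $\sigma(z)-\sigma(-z)=\max(z,0)-\max(-z,0)=z$, valid for every $z\in\real$. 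Intuitively, this says that the first pre-layer stores the positive and negative parts of the input in separate coordinates without discarding information, which is exactly why inserting a ReLU in front of the otherwise unconstrained network input is lossless.

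Next I would feed $x^0_{\mathrm{pre}}$ into the second pre-layer. Using the block form $W^1_{\mathrm{pre}}=\begin{bmatrix}W^1&-W^1\end{bmatrix}$ and $b^1_{\mathrm{pre}}=b^1$,
\[
    W^1_{\mathrm{pre}}x^0_{\mathrm{pre}}+b^1_{\mathrm{pre}}=W^1\sigma(x^{-1})-W^1\sigma(-x^{-1})+b^1=W^1\bigl(\sigma(x^{-1})-\sigma(-x^{-1})\bigr)+b^1=W^1x^{-1}+b^1,
\]
where the last step applies the identity above coordinate-wise to $x^{-1}$. Applying $\sigma$ gives $x^1_{\mathrm{pre}}=\sigma(W^1x^{-1}+b^1)$, and substituting $x^{-1}=x^0$ yields $x^1_{\mathrm{pre}}=\sigma(W^1x^0+b^1)=x^1$, the original post-activation vector of layer $\ell=1$.

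There is no genuine obstacle here: the weights $W^0_{\mathrm{pre}}$ and $W^1_{\mathrm{pre}}$ are constructed precisely so that the ``split into $\pm$ parts, then recombine'' pattern telescopes exactly. The only point that deserves a line of care is that every operation involved — the ReLU, the matrix splittings, and the identity $z=\sigma(z)-\sigma(-z)$ — is applied component-wise and therefore commutes with the block structure of the matrices; once that is made explicit, the computation is purely mechanical. As a side benefit, the same derivation makes visible that the pre-layer output $x^0_{\mathrm{pre}}$ lives in $\real^{2n^0}$ and is non-negative, which is the property relied upon in the subsequent subsections.
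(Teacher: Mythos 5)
Your proposal is correct and follows essentially the same route as the paper's proof: both rest on the element-wise identity $x^0=\sigma(x^0)-\sigma(-x^0)$, so that the two pre-layers split the input into positive and negative parts and the modified weights $\begin{bmatrix}W^1&-W^1\end{bmatrix}$ recombine them into the original pre-activation $W^1x^0+b^1$. Your write-up is somewhat more explicit about the block structure and the forward evaluation of each pre-layer, but the argument is the same.
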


\begin{proof}
    We can decompose the original input as $x^0=\sigma(x^0)-\sigma(-x^0)$, which yields $y^1=W^1\sigma(x^0)-W^1\sigma(-x^0)+b^1$. Now, recall that $y^1_{\mathrm{pre}}=W^1\sigma(Ix^{-1})-W^1\sigma(-Ix^{-1})+b^1$ according to Definition \ref{def:pre_layers}. These yield the result in the theorem when $x^{-1}=x^0$.
\end{proof}

From the architectural perspective, the transformation in Definition \ref{def:pre_layers} generates an additional ReLU layer that is prepended to the neural network. While this layer is necessary to make GINNACER satisfy the global soundness requirements from Definition \ref{def:ginnacer}, its presence might increase the number of non-linearities to unacceptable levels. This is especially true for neural networks with a large number of inputs (e.g. image classification networks). Furthermore, if the neural network has naturally bounded inputs, it might be possible to rescale them to take positive values only. In such cases, it is always possible to keep layer $\ell=1$ as is, and apply GINNACER from layer $\ell=2$ onward (see also \ref{sec:tradeoff}).

However, for the sake of clarity, in the remainder of this paper we assume that the lossless transformation in Definition \ref{def:pre_layers} is always applied. As such, we assume that $x^{-1}$ is always the transformed network input and omit the subscript $pre$ from the weights, biases and activations of layers $\ell\in\{0,1\}$.

\subsection{Converting to Inactive Canonical Form}
\label{sec:inactive_form}

Next, we introduce the \textit{Inactive Canonical Form} (ICF) of a fully-connected ReLU layer. This is an equivalent layer whose ReLU activation functions are all inactive for a specific layer input $x_c^{\ell-1}$, which we call the \textit{centroid}. Later in Section~\ref{sec:clustering}, the ICF will allow us to satisfy the center-exact reconstruction requirement of GINNACER (see Definition~\ref{def:ginnacer}).

\begin{definition}[Centroid Activation Matrices]
\label{def:centact_matrix}
    Let $A^{\ell}$ and $S^{\ell}$ be diagonal matrices of size $n^{\ell}\times n^{\ell}$, with the following entries:
    \begin{align}
        A^{\ell}_{ii} &=
        \begin{cases}
        1\qquad\text{if }(W^{\ell}x_c^{\ell-1}+b^{\ell})_i\geq0\\
        0\qquad\text{otherwise}
        \end{cases}\\
        S^{\ell}_{ii} &= 1 - 2A^{\ell}_{ii}
    \end{align}
    where $x_c^{\ell-1}$ is a specific input of layer $\ell$.
\end{definition}

The diagonal matrices $A^{\ell}$ and $S^{\ell}$ contain information about the activation pattern of the ReLUs when the layer input is $x_c^{\ell-1}$. Thanks to them, we can define the following equivalent layer transformation.

\begin{definition}[Inactive Canonical Form]
\label{def:icf}
    Given a ReLU layer with weights $W^{\ell}$ and biases $b^{\ell}$, define the following quantities:
    \begin{align}
    \label{eq:icf_r}
        r^{\ell}&=S^{\ell}\big(W^{\ell}x^{\ell-1}+b^{\ell}\big)\\
    \label{eq:icf_t}
        t^{\ell}&=A^{\ell}\big(W^{\ell}x^{\ell-1}+b^{\ell}\big)
    \end{align}
    The inactive canonical form of layer $\ell$ is:
    \begin{equation}
    \label{eq:icf_x}
        x^{\ell}_{ICF} = \sigma(r^{\ell}) + t^{\ell}
    \end{equation}
\end{definition}

\begin{theorem}
\label{th:icf_equivalence}
    The post-activation vector $x^{\ell}_{ICF}$ in Definition \ref{def:icf} is identical to $x^{\ell}=\sigma(W^{\ell}x^{\ell-1}+b^{\ell})$ for any $W^{\ell}$, $b^{\ell}$ and $x^{\ell-1}$.
\end{theorem}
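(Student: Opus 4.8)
The plan is to verify the identity neuron by neuron, exploiting the fact that $A^{\ell}$ and $S^{\ell}$ are diagonal and that the ReLU $\sigma$ acts componentwise. Write $z^{\ell}\equiv W^{\ell}x^{\ell-1}+b^{\ell}$ for the pre-activation vector, so that $r^{\ell}=S^{\ell}z^{\ell}$ and $t^{\ell}=A^{\ell}z^{\ell}$, and the claim reduces to showing $\sigma(S^{\ell}z^{\ell})+A^{\ell}z^{\ell}=\sigma(z^{\ell})$. Since $S^{\ell}$ has negative diagonal entries, one cannot pull it through $\sigma$ at the vector level; instead I would fix an index $i\in[1,n^{\ell}]$ and split on the two possible values of $A^{\ell}_{ii}$.

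If $A^{\ell}_{ii}=0$, then by Definition~\ref{def:centact_matrix} we have $S^{\ell}_{ii}=1-2\cdot 0=1$, so the $i$-th component of the left-hand side is $\sigma(z^{\ell}_i)+0=\sigma(z^{\ell}_i)=x^{\ell}_i$. If $A^{\ell}_{ii}=1$, then $S^{\ell}_{ii}=-1$, so the $i$-th component is $\sigma(-z^{\ell}_i)+z^{\ell}_i$; here I would invoke the elementary identity $\sigma(-a)+a=\sigma(a)$ for all $a\in\real$, which holds because $\max(-a,0)+a=\max(0,a)$, to conclude that this equals $\sigma(z^{\ell}_i)=x^{\ell}_i$. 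Since $i$ was arbitrary, $x^{\ell}_{ICF}=x^{\ell}$, and the argument used no property of $W^{\ell}$, $b^{\ell}$ or $x^{\ell-1}$, so it holds for all of them.

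I expect no genuine obstacle here: the only point requiring care is to resist simplifying $\sigma(S^{\ell}z^{\ell})$ to $S^{\ell}\sigma(z^{\ell})$ at the vector level, which is why the proof is organised as a componentwise case split keyed to $A^{\ell}_{ii}$. It is also worth remarking that the centroid $x_c^{\ell-1}$ enters only in deciding which case each neuron falls into; the equivalence itself is exact irrespective of that choice, which is exactly why the ICF is a lossless reparametrisation of the layer. This also foreshadows the center-exact reconstruction property: when $x^{\ell-1}=x_c^{\ell-1}$ one has $r^{\ell}_i=S^{\ell}_{ii}(W^{\ell}x_c^{\ell-1}+b^{\ell})_i\leq 0$ for every $i$, so all ReLUs in the ICF layer are inactive at the centroid.
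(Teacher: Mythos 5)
Your proof is correct and follows essentially the same route as the paper's: a componentwise case split on $A^{\ell}_{ii}$, with the inactive case being trivial and the active case resolved by the identity $\sigma(-a)+a=\sigma(a)$. The additional observations (that the centroid only selects which case applies, and that all ICF ReLUs are inactive at the centroid) are accurate and consistent with how the paper later uses the ICF.
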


\begin{proof}
    If neuron $i$ is already inactive at the centroid $x_c^{\ell-1}$, then it is left unchanged: $A_{ii}^{\ell}=0$ and $S_{ii}^{\ell}=1$ yield $r^{\ell}_i = y^{\ell}_i$ and $t^{\ell}_i=0$, thus $(x^{\ell}_{ICF})_i= \sigma(y^{\ell}_i)=x^{\ell}_i$. Else, any active neuron $j$ is associated with $A_{jj}^{\ell}=1$ and $S_{jj}^{\ell}=-1$, which yields $r^{\ell}_j = -y^{\ell}_j$ and $t^{\ell}_j=y^{\ell}_j$, thus $(x^{\ell}_{ICF})_j= \sigma(-y^{\ell}_j)+y^{\ell}_j$. Because of the relationship $\sigma(z)=\sigma(-z)+z$, we have $(x^{\ell}_{ICF})_j=x^{\ell}_j$.
\end{proof}

\subsection{Reducing the Number of Non-Linearities}
\label{sec:clustering}

Here, we introduce our primary abstraction step, where we replace each neuron's post-activation value $x_i^{\ell}$ with appropriate upper and lower bounds. Per the GINNACER requirements from Definition \ref{def:ginnacer}, we aim at reducing the number of ReLU activation functions while keeping the over-approximation margin zero for the centroid $x_c^{\ell-1}$. Our lower bound is straightforward:
\begin{definition}[Lower Bound]
\label{def:lower_bound}
    Given that the ReLU activation function satisfies $\sigma(r^{\ell}_i)\geq0$ for any $r^{\ell}_i$, let us bound Equation \ref{eq:icf_x} from below:
    \begin{equation}
    \label{eq:icf_x_lower}
        x^{\ell}=x^{\ell}_{ICF}\geq t^{\ell}
    \end{equation}
    with equality for any element $i$ such that $r^{\ell}_i\leq0$, e.g. when $x^{\ell-1}=x^{\ell-1}_c$ as per the ICF in Definition \ref{def:icf}.
\end{definition}

In contrast, our upper bound requires operating on multiple neurons at the same time:

\begin{definition}[Upper Bound]
\label{def:upper_bound}
    Let $D$ be a subset of neurons in layer $\ell$, and assume that we have $x_j^{\ell-1}\geq0$ for each layer input $j$. Then, define the maximum of the weights and biases across set $D$ as follows:\footnote{The idea of taking the maximum over the neuron weights and biases is fundamentally equivalent to the merging operations in \cite{Prabhakar2019} and \cite{Elboher2020}.}
    \begin{equation}
    \label{eq:cluster_weights}
        (V^{\ell}_D)_j
        \equiv\max_{i\in D}(S^{\ell}W^{\ell})_{ij}
        \qquad\text{and}\qquad
        u^{\ell}_D
        \equiv\max_{i\in D}(S^{\ell}b^{\ell})_i
    \end{equation}
    By construction, we can use $V^{\ell}_D$ and $u^{\ell}_D$ to bound Equation \ref{eq:icf_r} from above:
    \begin{equation}
    \label{eq:icf_r_upper}
        r^{\ell}_i\leq\hat{r}_D^{\ell}
        \equiv(V^{\ell}_D)^Tx^{\ell-1}+u^{\ell}_D
        \qquad\text{for all }i\in D
    \end{equation}
    Finally, Equation \ref{eq:icf_r_upper} yields the following upper bound:
    \begin{equation}
    \label{eq:icf_x_upper}
        x^{\ell}_i=(x^{\ell}_{ICF})_i
        \leq \sigma(\hat{r}_D^{\ell}) + t^{\ell}_i
        \qquad\text{for all }i\in D
    \end{equation}
\end{definition}

The upper bound in Definition~\ref{def:upper_bound} can be constructed from any subset $D$ of neurons. However, we would like the corresponding bound to produce no over-approximation at the centroid. This requirement can be formalized as follows.

\begin{definition}[Valid Neuron Subset]
\label{def:valid_subset}
    We call $D$ a valid subset of neurons for layer $\ell$ (in the GINNACER sense) if the associated upper bound from Equation \ref{eq:icf_r_upper} is negative at the centroid $x^{\ell-1}_c$. That is, $\hat{r}_D^{\ell}\equiv(V^{\ell}_D)^Tx^{\ell-1}_c+u^{\ell}_D\leq0$.
\end{definition}

Note that the ReLU activations of the neurons in a valid subset $D$ are replaced with a single ReLU $\sigma(\hat{r}_D^{\ell})$ as per Equation \ref{eq:icf_x_upper}. Thus, our objective is to partition the neurons from layer $\ell$ in the fewest number of subsets.

\begin{definition}[Valid Neuron Partition]
\label{def:valid_partition}
    Given the set of neurons $E^{\ell}\equiv\{1,\dots,n^{\ell}\}$ at layer $\ell$, we call $E^{\ell}=D_1\cup\dots\cup D_h$ a valid partition of $E^{\ell}$ (in the GINNACER sense) if $D_i\cap D_j=\emptyset$ for all $i\neq j$ and each $D_i\neq\emptyset$ is a valid subset according to Definition \ref{def:valid_subset}.
\end{definition}

At the same time, computing the best neuron subset partition is infeasible in practice. In fact, even the greedy algorithm used in \cite{Elboher2020} becomes computationally inefficient as the number of neurons per layer grows large, as we show in our experiments of Section \ref{sec:time_comparison}. For this reason, we settle on a light-weight neuron partition strategy in our implementation of GINNACER. We give a complete description of our algorithm in \ref{sec:partioning}.

\subsection{Propagating Interval Bounds}
\label{sec:interval_prop}

In Section \ref{sec:clustering} we compute upper and lower bounds on the output of layer $\ell$. As a consequence, any following layer $\ell'>\ell$ must be able to operate with interval inputs. To achieve this, let us first rewrite the upper bound in matrix form:

\begin{definition}[Layer-Wise Upper Bound]
\label{def:layer_upper_bound}
    Given a valid partition $E^{\ell}=D_1\cup\dots\cup D_h$, define the corresponding weights and biases:
    \begin{equation}
        V^{\ell}\equiv\begin{bmatrix}
        V^{\ell}_{D_1}&\dots&V^{\ell}_{D_h}
        \end{bmatrix}^T
        \qquad\text{and}\qquad
        u^{\ell}\equiv\begin{bmatrix}
        u^{\ell}_{D_1}&\dots&u^{\ell}_{D_h}
        \end{bmatrix}^T
    \end{equation}
    Furthermore, define the reconstruction matrix $P^{\ell}\in\real^{n^{\ell}\times h}$ with $P_{ij}=1$ if $i\in D_j$ and zero otherwise. Then, we can rewrite the upper bounds in Equation \ref{eq:icf_x_upper} for the whole layer as follows:
    \begin{equation}
    \label{eq:icf_x_upper_all}
        x^{\ell}
        \leq P^{\ell}\sigma(V^{\ell}x^{\ell-1}+u^{\ell}) + t^{\ell}
    \end{equation}
\end{definition}

Then, let us introduce interval inputs $x^{\ell-1}\in\interval^{n^{\ell-1}}$ and rewrite both upper and lower bounds:

\begin{definition}[Layer-Wise Abstraction]
\label{def:layer_abstraction}
    Assume that we are given input vectors $\underline{x}^{\ell-1}$ and $\overline{x}^{\ell-1}$, such that $x_i^{\ell-1}\in[\underline{x}_i^{\ell-1},\overline{x}_i^{\ell-1}]$. Define the following quantities:
    \begin{equation}
    \label{eq:interval_prop}
        \begin{bmatrix}
        \overline{r}^{\ell} \\
        \overline{t}^{\ell} \\
        \underline{t}^{\ell}
        \end{bmatrix}
        \equiv
        \begin{bmatrix}
        (V^{\ell})^+ & (V^{\ell})^- \\
        (A^{\ell}W^{\ell})^+ & (A^{\ell}W^{\ell})^- \\ (A^{\ell}W^{\ell})^- & (A^{\ell}W^{\ell})^+
        \end{bmatrix}
        \begin{bmatrix}
        \overline{x}^{\ell-1} \\
        \underline{x}^{\ell-1} \end{bmatrix}
        +
        \begin{bmatrix}
        u^{\ell} \\
        A^{\ell}b^{\ell} \\
        A^{\ell}b^{\ell}
        \end{bmatrix}
    \end{equation}
    where $(M)^+=\max(M,0)$ and $(M)^-=\min(M,0)$ element-wise. By construction, we can apply the interval propagation results from Equation \ref{eq:interval_prop} to further bound the values of Equations \ref{eq:icf_x_lower} and \ref{eq:icf_x_upper_all} as follows:
    \begin{equation}
    \label{eq:layer_output_bounds}
        \underline{x}^{\ell}\equiv\underline{t}^{\ell}
        \qquad\text{and}\qquad
        \overline{x}^{\ell}
        \equiv P^{\ell}\sigma(\overline{r}^{\ell}) + \overline{t}^{\ell}
    \end{equation}
    where $x_i^{\ell}\in[\underline{x}^{\ell}_i,\overline{x}^{\ell}_i]$ are the output intervals.
\end{definition}

In Figure \ref{fig:layer_abstract} we show a visual representation of the layer abstraction from Definition \ref{def:layer_abstraction}. Now, let us prove its main properties.

\begin{figure}[t]
\centering
    \includegraphics[width=\textwidth]{./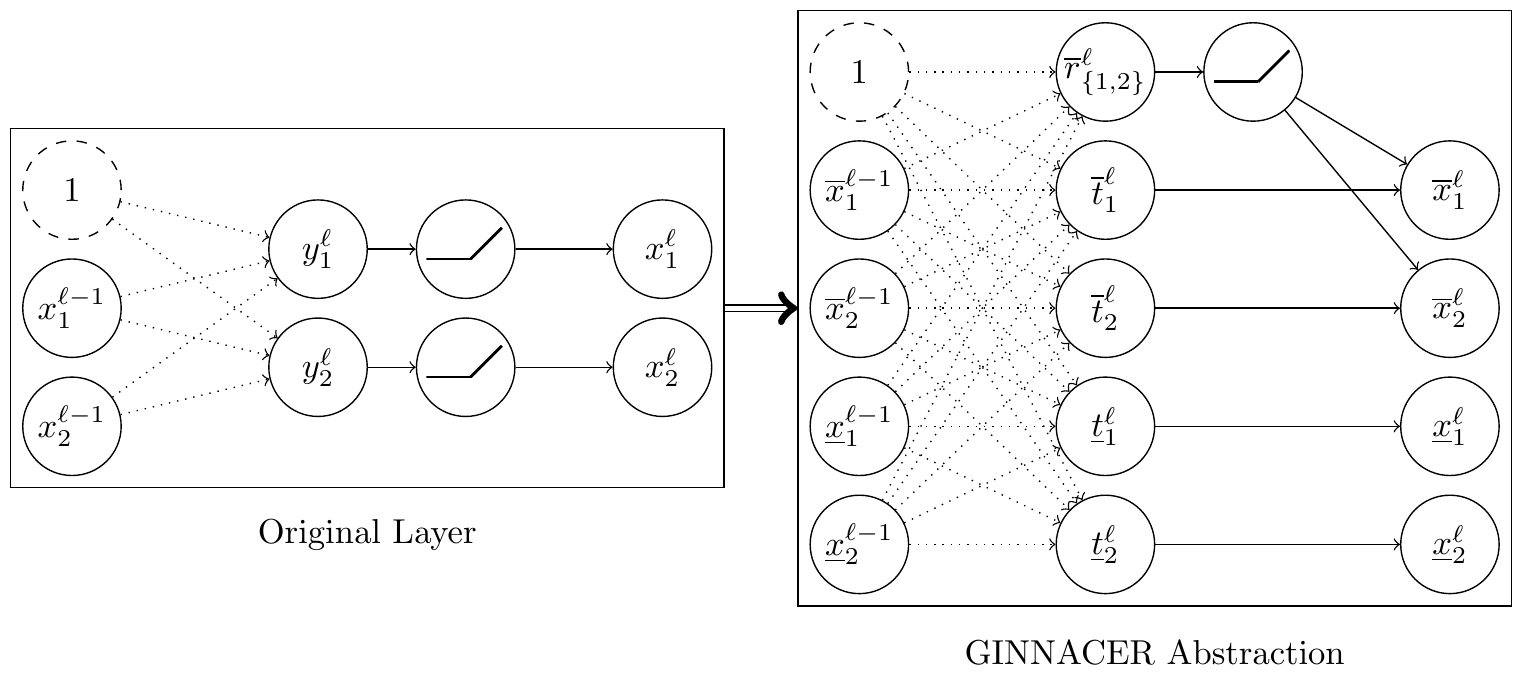}
\caption{Example of a GINNACER abstraction of a layer with two inputs $x^{\ell-1}$ and two outputs $x^{\ell}$. The dotted edges represent weighted connections, the dashed node represents the biases $b^{\ell}$. Here, we assume that the upper bound $\overline{r}^{\ell}_D$ in Equation \ref{eq:interval_prop} is shared by both neurons $D\equiv\{1,2\}$. As a result, the abstracted layer contains half the ReLU activations of the original one.}
\label{fig:layer_abstract}
\end{figure}

\begin{theorem}
\label{th:center_matching}
    The layer-wise abstraction in Definition \ref{def:layer_abstraction} yields matching output bounds at the centroid, i.e. when $\underline{x}^{\ell-1}=\overline{x}^{\ell-1}=x_c^{\ell-1}$.
\end{theorem}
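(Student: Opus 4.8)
The plan is to substitute the degenerate interval $\underline{x}^{\ell-1}=\overline{x}^{\ell-1}=x_c^{\ell-1}$ directly into the interval-propagation formula of Equation \ref{eq:interval_prop} and simplify, exploiting the elementary identity $(M)^{+}+(M)^{-}=M$, valid for every matrix $M$. First I would observe that with identical upper and lower inputs each block row of Equation \ref{eq:interval_prop} collapses: the top row yields $\overline{r}^{\ell}=\big((V^{\ell})^{+}+(V^{\ell})^{-}\big)x_c^{\ell-1}+u^{\ell}=V^{\ell}x_c^{\ell-1}+u^{\ell}$, and the second and third rows become identical, giving $\overline{t}^{\ell}=\underline{t}^{\ell}=A^{\ell}W^{\ell}x_c^{\ell-1}+A^{\ell}b^{\ell}=A^{\ell}\big(W^{\ell}x_c^{\ell-1}+b^{\ell}\big)$. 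In particular, the ``linear'' parts of the lower and upper bounds already coincide at the centroid.

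Next I would dispose of the ReLU term in the upper bound of Equation \ref{eq:layer_output_bounds}. The crucial ingredient is the validity of the partition (Definition \ref{def:valid_partition}): each block $D_j$ is a valid subset, so by Definition \ref{def:valid_subset} we have $\hat{r}^{\ell}_{D_j}=(V^{\ell}_{D_j})^{T}x_c^{\ell-1}+u^{\ell}_{D_j}\leq 0$. Stacking these inequalities over $j=1,\dots,h$ shows that $\overline{r}^{\ell}=V^{\ell}x_c^{\ell-1}+u^{\ell}\leq 0$ elementwise, hence $\sigma(\overline{r}^{\ell})=0$. Substituting into Equation \ref{eq:layer_output_bounds} gives $\overline{x}^{\ell}=P^{\ell}\cdot 0+\overline{t}^{\ell}=\overline{t}^{\ell}$, while $\underline{x}^{\ell}=\underline{t}^{\ell}$ by definition; combined with $\overline{t}^{\ell}=\underline{t}^{\ell}$ from the previous step, we conclude $\underline{x}^{\ell}=\overline{x}^{\ell}$, i.e. the output interval is degenerate at the centroid.

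Finally, to keep the statement self-contained I would note that this common value equals the true post-activation of the original layer at $x_c^{\ell-1}$: by the ICF argument (Theorem \ref{th:icf_equivalence} together with Definition \ref{def:icf}) we have $r^{\ell}\leq 0$ at $x_c^{\ell-1}$, so $\sigma(W^{\ell}x_c^{\ell-1}+b^{\ell})=\sigma(r^{\ell})+t^{\ell}=t^{\ell}=A^{\ell}\big(W^{\ell}x_c^{\ell-1}+b^{\ell}\big)$, which is precisely the value obtained above. I do not expect a genuine obstacle — the argument is essentially bookkeeping around the identity $(M)^{+}+(M)^{-}=M$ — and the only point requiring care is to invoke partition validity at exactly the right moment, since that is what forces the single surviving ReLU of each cluster to vanish at the centroid and is where the GINNACER construction is actually used.
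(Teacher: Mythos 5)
Your proof is correct and follows essentially the same route as the paper's: collapse Equation \ref{eq:interval_prop} at the degenerate input via $(M)^{+}+(M)^{-}=M$, then invoke partition validity to kill the ReLU term so that both bounds reduce to $t^{\ell}$. Your final paragraph identifying the common value with $\sigma(W^{\ell}x_c^{\ell-1}+b^{\ell})$ goes slightly beyond the stated theorem and is exactly the content of Corollary \ref{th:center_exact}, which the paper proves separately by the same ICF argument.
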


\begin{proof}
    For any matching inputs $x^{\ell-1}=\underline{x}^{\ell-1}=\overline{x}^{\ell-1}$, Equation \ref{eq:interval_prop} simplifies to $\overline{r}^{\ell}=V^{\ell}x^{\ell-1}+u^{\ell}$ and $t^{\ell}=\overline{t}^{\ell}=\underline{t}^{\ell}=A^{\ell}(W^{\ell}x^{\ell-1}+b^{\ell})$. Because of Definitions \ref{def:upper_bound}, \ref{def:valid_subset}, \ref{def:valid_partition} and \ref{def:layer_upper_bound}, we know that $V^{\ell}x^{\ell-1}+u^{\ell}\leq0$ when $x^{\ell-1}=x^{\ell-1}_c$. Therefore, Equation \ref{eq:layer_output_bounds} yields $\underline{x}^{\ell}=\overline{x}^{\ell}=t^{\ell}$. 
\end{proof}

\begin{corollary}
\label{th:center_exact}
    When $\underline{x}^{\ell-1}=\overline{x}^{\ell-1}=x_c^{\ell-1}$, the matching bounds equal the original output $\underline{x}^{\ell}=\overline{x}^{\ell}=\sigma(W^{\ell}x^{\ell-1}_c+b^{\ell})$.
\end{corollary}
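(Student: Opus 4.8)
The plan is to reduce everything to Theorem~\ref{th:center_matching} and then identify the common value $t^{\ell}$ with $\sigma(W^{\ell}x^{\ell-1}_c+b^{\ell})$. First I would invoke Theorem~\ref{th:center_matching}: when $\underline{x}^{\ell-1}=\overline{x}^{\ell-1}=x_c^{\ell-1}$, Equation~\ref{eq:layer_output_bounds} collapses to $\underline{x}^{\ell}=\overline{x}^{\ell}=t^{\ell}$, where $t^{\ell}=A^{\ell}(W^{\ell}x_c^{\ell-1}+b^{\ell})$ by Equation~\ref{eq:icf_t}. So the only thing left to show is that this particular masked pre-activation vector coincides with the genuine post-activation vector $\sigma(W^{\ell}x_c^{\ell-1}+b^{\ell})$.

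The key step is the entrywise identity $A^{\ell}\big(W^{\ell}x_c^{\ell-1}+b^{\ell}\big)=\sigma\big(W^{\ell}x_c^{\ell-1}+b^{\ell}\big)$, which I would verify directly from Definition~\ref{def:centact_matrix}. For each neuron $i$: if $(W^{\ell}x_c^{\ell-1}+b^{\ell})_i\geq0$ then $A^{\ell}_{ii}=1$, so the $i$-th component equals $(W^{\ell}x_c^{\ell-1}+b^{\ell})_i=\sigma\big((W^{\ell}x_c^{\ell-1}+b^{\ell})_i\big)$; otherwise $A^{\ell}_{ii}=0$, so the $i$-th component is $0=\sigma\big((W^{\ell}x_c^{\ell-1}+b^{\ell})_i\big)$. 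In both cases the entries agree, which is exactly the statement that $A^{\ell}$ is the diagonal $0/1$ mask selecting the active (nonnegative) coordinates. Chaining this with $\underline{x}^{\ell}=\overline{x}^{\ell}=t^{\ell}$ gives $\underline{x}^{\ell}=\overline{x}^{\ell}=\sigma(W^{\ell}x^{\ell-1}_c+b^{\ell})$, as claimed.

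An alternative route, which I would mention as a sanity check, goes through the ICF: by Definition~\ref{def:icf}, $x^{\ell}_{ICF}=\sigma(r^{\ell})+t^{\ell}$, and at the centroid $r^{\ell}=S^{\ell}(W^{\ell}x_c^{\ell-1}+b^{\ell})$ has every entry $\leq0$ (if neuron $i$ is active then $S^{\ell}_{ii}=-1$ multiplies a nonnegative pre-activation; if inactive then $S^{\ell}_{ii}=1$ multiplies a negative one), hence $\sigma(r^{\ell})=0$ and $x^{\ell}_{ICF}=t^{\ell}$; Theorem~\ref{th:icf_equivalence} then identifies $x^{\ell}_{ICF}$ with $\sigma(W^{\ell}x^{\ell-1}_c+b^{\ell})$, giving the same conclusion.

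There is no real obstacle here: the statement is essentially a corollary in the strict sense, and the whole argument is a short bookkeeping chain through the definitions of $A^{\ell}$, $t^{\ell}$ and the ICF. The only point requiring a moment's care is confirming that the $P^{\ell}\sigma(\overline{r}^{\ell})$ term in $\overline{x}^{\ell}$ truly vanishes at the centroid — but this is precisely what Definition~\ref{def:valid_subset} (validity of the partition, $\hat r^{\ell}_D\leq0$) and Theorem~\ref{th:center_matching} already guarantee, so I would simply cite them rather than re-derive the inequality.
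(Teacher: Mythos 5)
Your proposal is correct and follows the same route as the paper: invoke Theorem~\ref{th:center_matching} to get $\underline{x}^{\ell}=\overline{x}^{\ell}=t^{\ell}$, then identify $t^{\ell}=A^{\ell}(W^{\ell}x_c^{\ell-1}+b^{\ell})$ with $\sigma(W^{\ell}x_c^{\ell-1}+b^{\ell})$ via the definition of $A^{\ell}$ (the paper cites Definitions~\ref{def:centact_matrix}, \ref{def:icf} and Theorem~\ref{th:icf_equivalence} for this step; your entrywise verification and your ICF sanity check are just that citation spelled out).
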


\begin{proof}
    In Theorem \ref{th:center_matching}, we established that $\underline{x}^{\ell}=\overline{x}^{\ell}=t^{\ell}$. Because of Definitions \ref{def:centact_matrix}, \ref{def:icf} and Theorem \ref{th:icf_equivalence}, we have $t^{\ell}=\sigma(W^{\ell}x^{\ell-1}_c+b^{\ell})$.
\end{proof}

\begin{theorem}
\label{th:soundness}
    The layer-wise abstraction in Definition \ref{def:layer_abstraction} is sound, i.e. for all $x^{\ell-1}\geq0$ such that $x^{\ell-1}_i\in[\underline{x}^{\ell-1}_i,\overline{x}^{\ell-1}_i]$ we have $x^{\ell}=\sigma(W^{\ell}x^{\ell-1}+b^{\ell})$ with $x^{\ell}_i\in[\underline{x}^{\ell}_i,\overline{x}^{\ell}_i]$.
\end{theorem}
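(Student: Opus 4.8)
The plan is to prove the two elementwise inequalities $\underline{x}^{\ell}\le x^{\ell}$ and $x^{\ell}\le\overline{x}^{\ell}$ separately, in each case chaining an ``exact-input'' bound (from Sections \ref{sec:inactive_form} and \ref{sec:clustering}) with the interval-arithmetic bound encoded in Equation \ref{eq:interval_prop}. Throughout I would work in the inactive canonical form, writing $x^{\ell}=\sigma(r^{\ell})+t^{\ell}$ with $r^{\ell}$, $t^{\ell}$ as in Definition \ref{def:icf}; this is legitimate because Theorem \ref{th:icf_equivalence} guarantees $x^{\ell}_{ICF}=\sigma(W^{\ell}x^{\ell-1}+b^{\ell})$.

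\emph{Lower bound.} Since $\sigma(r^{\ell})\ge0$ elementwise, Definition \ref{def:lower_bound} gives $x^{\ell}\ge t^{\ell}=A^{\ell}W^{\ell}x^{\ell-1}+A^{\ell}b^{\ell}$. It then suffices to show $\underline{t}^{\ell}\le A^{\ell}W^{\ell}x^{\ell-1}+A^{\ell}b^{\ell}$ for every $x^{\ell-1}\in[\underline{x}^{\ell-1},\overline{x}^{\ell-1}]$. This is the standard interval lower bound for a linear map: splitting $A^{\ell}W^{\ell}$ into its positive and negative parts, $(A^{\ell}W^{\ell})^{+}\underline{x}^{\ell-1}+(A^{\ell}W^{\ell})^{-}\overline{x}^{\ell-1}\le A^{\ell}W^{\ell}x^{\ell-1}$, which is exactly the third block row of Equation \ref{eq:interval_prop}. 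Hence $\underline{x}^{\ell}=\underline{t}^{\ell}\le t^{\ell}\le x^{\ell}$.

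\emph{Upper bound.} I would go through three nested bounds. First, the clustering bound, Equation \ref{eq:icf_x_upper}: for the block $D$ of the valid partition containing neuron $i$, the definition of $(V^{\ell}_D)_j$ as the elementwise maximum of $(S^{\ell}W^{\ell})_{ij}$ over $i\in D$, together with the hypothesis $x^{\ell-1}_j\ge0$ (this is the one and only place the non-negativity of the input is used), gives $r^{\ell}_i\le\hat{r}^{\ell}_D=(V^{\ell}_D)^{T}x^{\ell-1}+u^{\ell}_D$; monotonicity of $\sigma$ then yields $x^{\ell}_i=\sigma(r^{\ell}_i)+t^{\ell}_i\le\sigma(\hat{r}^{\ell}_D)+t^{\ell}_i$, i.e.\ in matrix form $x^{\ell}\le P^{\ell}\sigma(V^{\ell}x^{\ell-1}+u^{\ell})+t^{\ell}$, which is Equation \ref{eq:icf_x_upper_all}. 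Second, bound the argument of this $\sigma$ by intervals: the first block row of Equation \ref{eq:interval_prop} gives $V^{\ell}x^{\ell-1}+u^{\ell}\le(V^{\ell})^{+}\overline{x}^{\ell-1}+(V^{\ell})^{-}\underline{x}^{\ell-1}+u^{\ell}=\overline{r}^{\ell}$, and since $\sigma$ is monotone and $P^{\ell}$ has non-negative entries, $P^{\ell}\sigma(V^{\ell}x^{\ell-1}+u^{\ell})\le P^{\ell}\sigma(\overline{r}^{\ell})$. Third, bound $t^{\ell}$ by intervals: the second block row gives $t^{\ell}=A^{\ell}W^{\ell}x^{\ell-1}+A^{\ell}b^{\ell}\le\overline{t}^{\ell}$. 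Adding the last two inequalities yields $x^{\ell}\le P^{\ell}\sigma(\overline{r}^{\ell})+\overline{t}^{\ell}=\overline{x}^{\ell}$, as required.

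The only real obstacle I anticipate is bookkeeping discipline: each monotonicity/non-negativity step must be applied in the correct order — in particular, $\sigma$ must be applied to a quantity that has already been bounded above, and the left-multiplication by the $0/1$ matrix $P^{\ell}$ (which preserves the inequality precisely because its entries are non-negative) must be performed only after the argument of $\sigma$ has been bounded. It is also worth stating explicitly that the hypothesis $x^{\ell-1}\ge0$ is invoked solely in the clustering step; the interval-propagation steps are valid for arbitrary (possibly signed) intervals.
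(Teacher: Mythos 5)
Your proof is correct and follows essentially the same route as the paper's: first establish the pointwise clustering bounds of Equations \ref{eq:icf_x_lower} and \ref{eq:icf_x_upper_all} (the only place the hypothesis $x^{\ell-1}\geq0$ is needed, as you rightly note), then relax to the interval endpoints via the positive/negative splitting of Equation \ref{eq:interval_prop}, using monotonicity of $\sigma$ and non-negativity of $P^{\ell}$. Your write-up is somewhat more explicit than the paper's (which compresses the interval step into a single two-line maximisation), but the decomposition and the key facts invoked are identical.
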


\begin{proof}
    Because of Definitions \ref{def:layer_upper_bound}, we know that Equation \ref{eq:icf_x_upper_all} is a valid upper bound on $x^{\ell}$ for any specific $x^{\ell-1}\geq0$. When $x^{\ell-1}_i\in[\underline{x}^{\ell-1}_i,\overline{x}^{\ell-1}_i]$, we can write:
    \begin{align}\begin{split}
        x^{\ell}
        &\leq\max_{x^{\ell-1}}\Big\{P^{\ell}\sigma(V^{\ell}x^{\ell-1}+u^{\ell}) + A^{\ell}(W^{\ell}x^{\ell-1}+b^{\ell})\Big\}\\
        &\leq P^{\ell}\sigma\big(\max_{x^{\ell-1}}\{V^{\ell}x^{\ell-1}\}+u^{\ell}\big) + \max_{x^{\ell-1}}\{A^{\ell}W^{\ell}x^{\ell-1}\}+A^{\ell}b^{\ell}
    \end{split}\end{align}
    where each separate maximization problem reduces to taking $\underline{x}^{\ell-1}_i$ when the corresponding weights $V^{\ell}_{ij}$ or $(A^{\ell}W^{\ell})_{ij}$ are negative, and $\overline{x}^{\ell-1}_i$ otherwise. In matrix form, this can be written as Equation \ref{eq:interval_prop} shows. The lower bound in Equation \ref{eq:layer_output_bounds} is derived from Definition \ref{def:lower_bound} in similar fashion.
\end{proof}

\subsection{Composing the Abstract Layers}
\label{sec:full_abstraction}

At the beginning of Section \ref{sec:algorithm}, we mentioned that our algorithm computes a separate abstraction $g^{\ell}$ for each layer $\ell$. We define such layer-wise abstraction in Definition \ref{def:layer_abstraction}. Here, we prove that composing all the abstracted layers $g^{\ell}$ together satisfies the GINNACER requirements from Definition \ref{def:ginnacer}. First, let us take the input centroid from the GINNACER definition and propagate it through the whole neural network:

\begin{definition}[Centroid Propagation]
\label{def:centroid_prop}
    Given any centroid input $x^{-1}_c$ for the first pre-layer $\ell=0$, define $x^{\ell}_c\equiv\sigma(W^{\ell}x^{\ell-1}_c+b^{\ell})$ as the centroid of any following layer $\ell>0$.
\end{definition}

The sequence of centroids in Definition \ref{def:centroid_prop} allows us to align each abstracted layer. As a result, each of them introduces zero over-approximation margin for the given network input:

\begin{definition}[Full Abstraction]
\label{def:full_abstraction}
    Given the centroids $x^{\ell}_c$ from Definition \ref{def:centroid_prop} and any valid partition of $E^{\ell}$ for each layer $\ell$ (see Definition \ref{def:valid_partition}), define $g^{\ell}:\interval^{n^{\ell-1}}\to\interval^{n^{\ell}}$ as the corresponding layer-wise abstraction functions from Definition \ref{def:layer_abstraction}. Then, we can write a full abstraction as follows:
    \begin{equation}
        g^{\mathrm{full}}
        \equiv\sigma(W^0_{\mathrm{pre}}x^{-1})\circ g^1_{\mathrm{pre}}\circ g^2\circ\dots\circ g^m
    \end{equation}
    where we explicitly mark the two pre-layers from Section \ref{sec:neg_input_trick}, and we assume $g^1_{\mathrm{pre}}([x^0,x^0])$ accepts degenerate intervals as input.
\end{definition}

\begin{theorem}
    The abstraction in Definition \ref{def:full_abstraction} satisfies the GINNACER requirements from Definition \ref{def:ginnacer}, excluding the optional pre-layer $\ell=0$.
\end{theorem}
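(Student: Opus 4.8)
The plan is to verify the three GINNACER properties of Definition~\ref{def:ginnacer} separately, in each case reducing the global (network-level) claim to the already-proven layer-level results of Section~\ref{sec:interval_prop} via an induction on the layer index.

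\textbf{Global soundness.} I would run the original network $f$ and the abstraction $g^{\mathrm{full}}$ forward in lockstep on a common input, and prove by induction on $\ell$ that the true post-activation $x^{\ell}$ of $f$ satisfies $x^{\ell}\in[\underline{x}^{\ell},\overline{x}^{\ell}]$, where $[\underline{x}^{\ell},\overline{x}^{\ell}]$ is the interval produced by $g^{\mathrm{full}}$. The base case is the first pre-layer $\ell=0$: it is the lossless map of Definition~\ref{def:pre_layers}, so $x^{0}_{\mathrm{pre}}=\sigma(W^{0}_{\mathrm{pre}}x^{-1})$ is reproduced exactly, it is non-negative, and the associated interval is degenerate. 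For the inductive step at layer $\ell$, the hypothesis gives $x^{\ell-1}\in[\underline{x}^{\ell-1},\overline{x}^{\ell-1}]$, and $x^{\ell-1}\ge0$ because $x^{\ell-1}$ is either a ReLU output of $f$ ($\ell\ge2$) or the non-negative output of the pre-layer ($\ell=1$); this is exactly the precondition of Theorem~\ref{th:soundness}, which then yields $x^{\ell}\in[\underline{x}^{\ell},\overline{x}^{\ell}]$. Since $x^{\ell}=\sigma(\cdot)\ge0$ for the hidden layers, the non-negativity hypothesis also survives, closing the induction. Evaluating at $\ell=m$ gives $f(x^{-1})\in g^{\mathrm{full}}([x^{-1},x^{-1}])$, i.e.\ global soundness. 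Note the final output layer $\ell=m$ is linear rather than ReLU, so there it suffices to propagate intervals linearly (no ICF, no non-negativity requirement, no ReLU), which is the linear-bound half of Theorem~\ref{th:soundness}.

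\textbf{Reduced non-linearity.} I would observe that each layer-wise abstraction $g^{\ell}$ of Definition~\ref{def:layer_abstraction} is computed from its interval input $(\underline{x}^{\ell-1},\overline{x}^{\ell-1})$ by affine maps (Equation~\ref{eq:interval_prop}) and a single ReLU block $\sigma(\overline{r}^{\ell})$ of width $h^{\ell}$, where $h^{\ell}$ is the number of blocks in the chosen valid partition of $E^{\ell}$ (Definition~\ref{def:valid_partition}), while the lower bound $\underline{x}^{\ell}=\underline{t}^{\ell}$ in Equation~\ref{eq:layer_output_bounds} uses none. Hence $g^{\mathrm{full}}$ is again a ReLU network, and — excluding the $2n^{0}$ ReLUs of the optional pre-layer $\ell=0$ — it uses $a_{g}=\sum_{\ell}h^{\ell}$ non-linearities (with $h^{m}=0$ for the linear output layer). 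Because every block of a valid partition is non-empty, $h^{\ell}\le n^{\ell}$, so $a_{g}\le\sum_{\ell}n^{\ell}=a_{f}$, with strict reduction exactly when some block contains two or more neurons.

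\textbf{Center-exact reconstruction.} I would take $x^{0}_{c}$ to be the centroid obtained by propagating the chosen $x^{-1}_{c}$ through $f$ as in Definition~\ref{def:centroid_prop}, and feed the degenerate interval $[x^{-1}_{c},x^{-1}_{c}]$ into $g^{\mathrm{full}}$. Rerunning the induction above but invoking Corollary~\ref{th:center_exact} in place of Theorem~\ref{th:soundness}, each $g^{\ell}$ maps the degenerate interval $[x^{\ell-1}_{c},x^{\ell-1}_{c}]$ to the degenerate interval $[x^{\ell}_{c},x^{\ell}_{c}]$ with $x^{\ell}_{c}=\sigma(W^{\ell}x^{\ell-1}_{c}+b^{\ell})$, which coincides with the forward pass of $f$; the pre-layers reproduce $x^{-1}_{c}$ exactly by Definition~\ref{def:pre_layers}. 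Chaining to $\ell=m$ gives $g^{\mathrm{full}}([x^{-1}_{c},x^{-1}_{c}])=\{f(x^{-1}_{c})\}$, the required exact reconstruction.

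The main obstacle I expect is making the soundness induction airtight: one must check that the box $[\underline{x}^{\ell},\overline{x}^{\ell}]$ returned by $g^{\ell}$ genuinely contains the true activation so that the precondition of Theorem~\ref{th:soundness} is available at the next step, that the non-negativity assumption $x^{\ell-1}\ge0$ holds everywhere (true post-ReLU activations, plus the pre-layer for $\ell=1$), and that the linear output layer $\ell=m$ is handled without appealing to the ReLU-specific ICF machinery (details deferred to the pseudocode of Section~\ref{sec:algo_code}). Once soundness is set up, the non-linearity count is immediate from $h^{\ell}\le n^{\ell}$, and center-exactness is just the same induction built on Corollary~\ref{th:center_exact}.
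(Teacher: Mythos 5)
Your proposal is correct and follows essentially the same route as the paper's own proof: decompose into the three GINNACER properties, derive soundness from composing the layer-wise result of Theorem~\ref{th:soundness}, count non-linearities via $h\leq n^{\ell}$ from the valid partition, and get center-exactness from Corollary~\ref{th:center_exact} applied layer by layer. Your version is merely more explicit — the paper states "composition of sound abstractions" where you spell out the induction and verify the non-negativity precondition and the linear output layer — but no new idea is involved.
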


\begin{proof}
    First, $g^{\mathrm{full}}$ is globally sound as it is the composition of an equivalent transformation (the pre-layer $\ell=0$) and several sound abstractions $g^{\ell}$ (see Theorem \ref{th:soundness}). Second, $g^{\mathrm{full}}$ never contains more non-linearities than $f$ (excluding the pre-layer $\ell=0$) since the neurons $E^{\ell}$ of each layer $\ell>0$ are partitioned in only $h\leq n^{\ell}$ subsets (see Definition \ref{def:valid_partition}). Third, $g^{\mathrm{full}}$ yields center-exact reconstruction since each layer $\ell\in[0,m]$ yields center-exact reconstruction (see Corollary \ref{th:center_exact}).
\end{proof}

\subsection{Pseudocode}
\label{sec:algo_code}

Here, summarise our algorithmic contribution by introducing Algorithms \ref{alg:ginnacer_params} and \ref{alg:ginnacer_exec}, which contains the pseudocode of GINNACER. More specifically, Algorithm \ref{alg:ginnacer_params} computes all the parameters $G$ of the abstraction. The presence of potential negative inputs (see Section \ref{sec:neg_input_trick}) is handled in Lines $2-8$. The inactive canonical form (see Section \ref{sec:inactive_form}) is used in Line 10. The number of non-linearities (see Section \ref{sec:clustering}) is reduced in Line 11, by computing a valid neuron partition (see also \ref{sec:partioning}, Algorithm \ref{alg:neuron_partition}). The parameters of the corresponding upper bounds (see Section \ref{sec:interval_prop}) are computed in Line 12. The centroid for the next layer (see Section \ref{sec:full_abstraction}) is prepared in Line 13.

\begin{algorithm}[t]
\caption{Compute GINNACER Parameters}
\label{alg:ginnacer_params}
\begin{algorithmic}[1]
\Require weights $(W^1,\dots,W^m)$, biases $(b^1,\dots,b^m)$, centroid $x^0_c$, negative input flag $neg\_in$ (see Definition \ref{def:pre_layers}).
\Ensure GINNACER parameters $G$.
\State $G\gets()$
\If{$neg\_in$}
    \State $W^0\gets[I,-I]^T$
    \State $b^0\gets[0,\dots,0]^T$
    \State $G\gets(\{W^0,b^0\})$\hfill\Comment{store pre-layer $\ell=0$ parameters}
    \State $W^1\gets[W^1,-W^1]$\hfill\Comment{modify (pre-)layer $\ell=1$ parameters}
    \State $x^0_c\gets \sigma(W^0x^0_c+b^0)$\hfill\Comment{pass the centroid through pre-layer $\ell=0$}
\EndIf
\ForAll{$\ell\in[1,m]$}
    \State $\{A^{\ell},S^{\ell}\}\gets\text{CentroidAct}(W^{\ell},b^{\ell},x^{\ell-1}_c)$\hfill\Comment{Definition \ref{def:centact_matrix}}
    \State $\{D_1,\dots,D_h\}\gets\text{ValidPartition}(S^{\ell}W^{\ell},S^{\ell}b^{\ell},x^{\ell-1}_c)$\hfill\Comment{Definition \ref{def:valid_partition}}
    \State $\{P^{\ell},V^{\ell},u^{\ell}\}\gets\text{UpBound}(S^{\ell}W^{\ell},S^{\ell}b^{\ell},\{D_1,\dots,D_h\})$\hfill\Comment{Definition \ref{def:layer_upper_bound}}
    \State $x^{\ell}_c\gets\sigma(W^{\ell}x^{\ell-1}_c+b^{\ell})$\hfill\Comment{Definition \ref{def:centroid_prop}}
    \State $G\gets(G,\{P^{\ell},V^{\ell},u^{\ell},A^{\ell}W^{\ell},A^{\ell}b^{\ell}\})$\hfill\Comment{store layer $\ell$ parameters}
\EndFor
\end{algorithmic}
\end{algorithm}

Given the parameters $G$, Algorithm \ref{alg:ginnacer_exec} computes the abstracted output for a specific input $x^0$. Such input is converted into degenerate intervals in Lines 1-5. Then, the layer-wise abstraction from Definition \ref{def:layer_abstraction} is executed in Lines 6-12. After propagating the intervals over all the layer, the output abstraction $\left[\underline{x}^m,\overline{x}^m \right]$ is given as a result.

\begin{algorithm}[t]
\caption{Execute GINNACER Abstraction}
\label{alg:ginnacer_exec}
\begin{algorithmic}[1]
\Require GINNACER parameters $G$ (see Algorithm \ref{alg:ginnacer_params}), concrete input $x^0$, negative input flag $neg\_in$ (see Definition \ref{def:pre_layers}).
\Ensure interval bounds $\underline{x}^m$ and $\overline{x}^m$.
\If{$neg\_in$}
    \State $\underline{x}^0,\overline{x}^0\gets\sigma(W^0x^0+b^0)$\hfill\Comment{compute exact pre-layer $\ell=0$}
\Else
    \State $\underline{x}^0,\overline{x}^0\gets x^0$\hfill\Comment{degenerate input intervals (Definition \ref{def:full_abstraction})}
\EndIf
\ForAll{$\ell\in[1,m]$}
    \State $\overline{r}^{\ell}\gets(V^{\ell})^+\overline{x}^{\ell-1}+(V^{\ell})^-\underline{x}^{\ell-1}+u^{\ell}$\hfill\Comment{split positive}
    \State $\overline{t}^{\ell}\gets(A^{\ell}W^{\ell})^+\overline{x}^{\ell-1}+(A^{\ell}W^{\ell})^-\underline{x}^{\ell-1}+A^{\ell}b^{\ell}$\hfill\Comment{and negative}
    \State $\underline{t}^{\ell}\gets(A^{\ell}W^{\ell})^+\underline{x}^{\ell-1}+(A^{\ell}W^{\ell})^-\overline{x}^{\ell-1}+A^{\ell}b^{\ell}$\hfill\Comment{matrix entries}
    \State $\overline{x}^{\ell}\gets P^{\ell}\sigma(\overline{r}^{\ell})+\overline{t}^{\ell}$
    \State $\underline{x}^{\ell}\gets\underline{t}^{\ell}$
\EndFor
\end{algorithmic}
\end{algorithm}

\subsection{Time Complexity}
\label{sec:algo_complexity}

We conclude our theoretical discussion by analysing the time complexity of GINNACER for a single layer $\ell$. Recall that each layer $\ell$ has $n^{\ell-1}$ inputs and $n^{\ell}$ neurons. According to Algorithm \ref{alg:ginnacer_params}, we need to execute the following operations when computing a GINNACER abstraction:

\begin{itemize}
    \item Computing the activations at the centroid (Line 10) require establishing whether the condition $(W^{\ell}x_c^{\ell-1}+b^{\ell})_i\geq0$ holds for each neuron $i\in[1,n^{\ell}]$. This takes $O(n^{\ell}n^{\ell-1})$ multiplications, $O(n^{\ell}\log_2n^{\ell-1})$ additions and $O(n^{\ell})$ comparison operations.

    \item Computing a valid partition (Line 11) is implementation-dependent. Our approach (see \ref{sec:partioning}, Algorithm \ref{alg:neuron_partition}) computes $n^{\ell}(n^{\ell}-1)/2$ candidate merging operations in the worst case (no neuron can be merged) and $n^{\ell}-1$ in the best case (all neurons are merged). If we keep track of each subset weights $V_D^{\ell}$ and biases $u_D^{\ell}$ during the execution of the algorithm, each merging operation requires $O(n^{\ell-1})$ max operations. Furthermore, computing whether the condition $(V_D^{\ell})^Tx_c^{\ell-1}+u_D^{\ell}\leq0$ holds requires $O(n^{\ell-1})$ multiplications, $O(log_2n^{\ell-1})$ additions and one comparison operations. Similarly, constructing the permutation matrix $P^{\ell}$ (Line 13) requires $O(n^{\ell})$ assignments.
    
    \item Propagating the centroid (Line 13) requires computing $x_c^{\ell}=\sigma(W^{\ell}x_c^{\ell-1}+b^{\ell})$. This takes $O(n^{\ell}n^{\ell-1})$ multiplications, $O(n^{\ell}\log_2n^{\ell-1})$ additions and $O(n^{\ell})$ max operations for the activation function.
\end{itemize}

In summary, the abstraction time of GINNACER is dominated by the necessity of computing $O((n^{\ell})^2)$ candidate merging operations. Each one of them takes $O(n^{\ell-1})$ operations, yielding a total per-layer complexity of $O((n^{\ell})^2n^{\ell-1})$.

In contrast, the theoretical complexity of executing GINNACER at inference time (see Algorithm \ref{alg:ginnacer_exec}) is not greater than that of the original network. In fact, each affine operation in Lines 7-9 requires two (sparse) matrix-vector products, with $O(n^{\ell}n^{\ell-1})$ multiplications and $O(n^{\ell}\log_2n^{\ell-1})$ additions. Furthermore, Line 10 requires $O(n^{\ell})$ max operations for the ReLUs, $O(n^{\ell})$ copy operations ($P^{\ell}$ is a permutation matrix) and $O(n^{\ell})$ additions. Overall, the complexity is dominated by the $O(n^{\ell}n^{\ell-1})$ matrix-vector products, as it is for the original layer $\sigma(W^{\ell}x^{\ell-1}+b^{\ell})$. In practice, our experiments of Section \ref{sec:time_comparison} show that some execution overhead is to be expected.

\section{Empirical Analysis}
\label{sec:experiments}

In this section, we complement our previous qualitative comparison (see Section~\ref{sec:motivating_example}) with a rigorous empirical analysis of the behaviour of neural network abstraction methods. Our objective is twofold:
\begin{itemize}
    \item Compare our GINNACER algorithm described in Section \ref{sec:algorithm} to the existing state-of-the-art methods. We do so in Section \ref{sec:empirical_comparison}.
    \item Quantify the ability of our algorithm from Section \ref{sec:algorithm} to reduce the number of ReLU activation functions. We do so in Section \ref{sec:empirical_clustering}.
\end{itemize}
Before addressing these questions, let us introduce our experimental setting.

\subsection{List of Neural Networks}
\label{sec:list_of_nn}

In our experiments, we analyze the performance of abstraction methods over the following set of neural networks. These networks cover a wide range of applications: aircraft control, image classification and anomaly detection.
\begin{itemize}
    \item \textbf{Acas-Xu.} A collection of 45 ReLU networks for unmanned aircraft collision avoidance with six hidden layers of 50 neurons each.\footnote{\label{foot:vvn_comp}\url{https://github.com/stanleybak/vnncomp2021/tree/main/benchmarks}} The five inputs encode the relative position and trajectory of an incoming aircraft, whereas the outputs predict which of the five possible piloting decisions needs to be taken to avoid a collision \cite{Julian2016}.
    
    \item \textbf{MNIST (fc).} An image classification network from the VNN-COMP'21 suite with six layers of 256 neurons each.\footref{foot:vvn_comp} The inputs are gray-scale images of handwritten digits, with 28x28 pixels rescaled to the interval [0,1]. The ten outputs represent prediction scores on which digit $\{0\dots9\}$ is most likely to be in the input \cite{Lecun1998}.
    
    \item \textbf{ToyADMOS.} A ReLU autoencoder with 4 layers of 128 neurons, a bottleneck layer with just 8 neurons, and another 4 layers of 128 neurons for a total of nine layers (batch normalisation layers are absorbed into the fully-connected ones).\footnote{\label{foot:tiny_ml}\url{https://github.com/mlcommons/tiny/tree/master/benchmark/training}} The input is a 640 feature vector representing the spectrogram of a short audio excerpt of industrial machinery sound. This network is routinely used to test neural network compression algorithms as part of the MLPerf Tiny Benchmark \cite{Banbury2021}.

    \item \textbf{Visual Wake Words.} A medium-sized convolutional network for recognising the presence of people in 96x96 pixel images with three colour channels.\footref{foot:tiny_ml} For our experiments, we unroll the last 16 layers to their equivalent fully-connected form, and store the weights in sparse matrix format. The number of neurons per layer is reported in Table \ref{tab:network_sizes}. This network is also part of the MLPerf Tiny Benchmark suite \cite{Banbury2021}.
\end{itemize}

\begin{table}[t]
\centering
    \resizebox{\linewidth}{!}{
    \setlength\tabcolsep{2pt} 
    \begin{tabular}{ |c|cccccccccccccccc| } 
        \hline
        & \multicolumn{16}{|c|}{ReLUs per Layer} \\
        \cline{2-17}
        & L1 & L2 & L3 & L4 & L5 & L6 & L7 & L8 & L9 & L10 & L11 & L12 & L13 & L14 & L15 & L16 \\
        \hline
        AcasXu (all) & 50 & 50 & 50 & 50 & 50 & 50 & & & & & & & & & & \\
        MNIST (fc) & 256 & 256 & 256 & 256 & 256 & 256 & & & & & & & & & & \\
        ToyADMOS & 128 & 128 & 128 & 128 & 8 & 128 & 128 & 128 & 128 & & & & & & & \\
        VisualWW & 2304 & 4608 & 4608 & 4608 & 4608 & 4608 & 4608 & 4608 & 4608 & 4608 & 4608 & 4608 & 1152 & 2304 & 2304 & 2304 \\
        \hline
    \end{tabular}}
\caption{Size of the networks in our experiment suite. The output layers are omitted as they do not contain ReLU non-linearities.}
\label{tab:network_sizes}
\end{table}

\subsection{List of State-of-the-art Algorithms}
\label{sec:list_of_sota}

In our experiments, we compare against the following state-of-the-art algorithms:
\begin{itemize}
    \item \textbf{Interval Networks \citep{Prabhakar2019}.} This algorithm converts the original neural network to an equivalent network capable of handling interval arithmetic. Then, the random pairs of neurons in the same layer are merged: a ``right'' abstraction and ``left'' abstraction are needed to correctly over-approximate the weights and biases of the layer under consideration and the next one. This algorithm cannot handle negative inputs; thus, we apply our pre-processing step from Section \ref{sec:neg_input_trick} for a fair comparison.
    \item \textbf{Network Abstractions \citep{Elboher2020}.} This algorithm creates four copies of the original neurons, depending on whether they are upper (``Inc'') or lower (``Dec'') bounds and whether their output weights are all positive (``Pos'') or negative (``Neg''). Neurons belonging to the same layer and type (e.g. ``Pos-Inc'') can then be merged by taking the max or min of their weights and biases. This algorithm cannot handle negative inputs either; thus, we apply our pre-processing step from Section \ref{sec:neg_input_trick} in our experiments.
    \item \textbf{FastLin Bounds \citep{Weng2018}.} This algorithm provides a pair of linear upper and lower bounds on each ReLU activation function. Crucially, the algorithm requires knowing the maximum and minimum possible inputs of each ReLU, which are computed by offline propagation of intervals through the neural network. The FastLin bounds need to be re-computed whenever an input exceeds the expected range. This is a major weakness of all local abstraction techniques used in state-of-the-art neural network verifiers~\citep{Katz2019,Salman2019,Bak2021}.
\end{itemize}
In all our experiments, we set the parameters of the above algorithms as follows. For the two global abstraction algorithms in \cite{Prabhakar2019,Elboher2020}, we set the number of neurons in each layer to be equal to the number of ReLU non-linearities GINNACER uses.\footnote{When this is not possible, the minimum larger number is chosen.} For the local abstraction in \cite{Weng2018}, we center the input domain around the same centroid $c$ as GINNACER, and bound each input in the hypercube $x_i\in[c_i-\epsilon,c_i+\epsilon]$.

\subsection{Over-Approximation Comparison}
\label{sec:empirical_comparison}

In this section, we compare the ability of our GINNACER algorithm to abstract the networks listed in Section~\ref{sec:list_of_nn} against the state-of-the-art. More specifically, we choose three different centroids: the centre of properties 1 and 2 in the VNN-COMP'21 competition for the AcasXu 1-1 network, a random image from the MNIST dataset, a random spectrogram from the ToyADMOS test set, and a random image from the COCO 2014 training set for the Visual Wake Word network.\footnote{\url{https://cocodataset.org/\#download}} Then, we sample 10000 input points on the surface of a hypercube of size $2\delta$ centred around centroid $c$. For each abstraction algorithm, we report the largest over-approximation margin (worst-case scenario) across the 10000 samples. Note that for the FastLin abstraction \citep{Weng2018} we set parameter $\epsilon=\delta$, thus testing the smallest possible abstraction for each value of $\delta$.

\begin{figure}[t]
\centering
    \begin{subfigure}[b]{0.49\textwidth}
    \centering
        \includegraphics[width=\textwidth]{./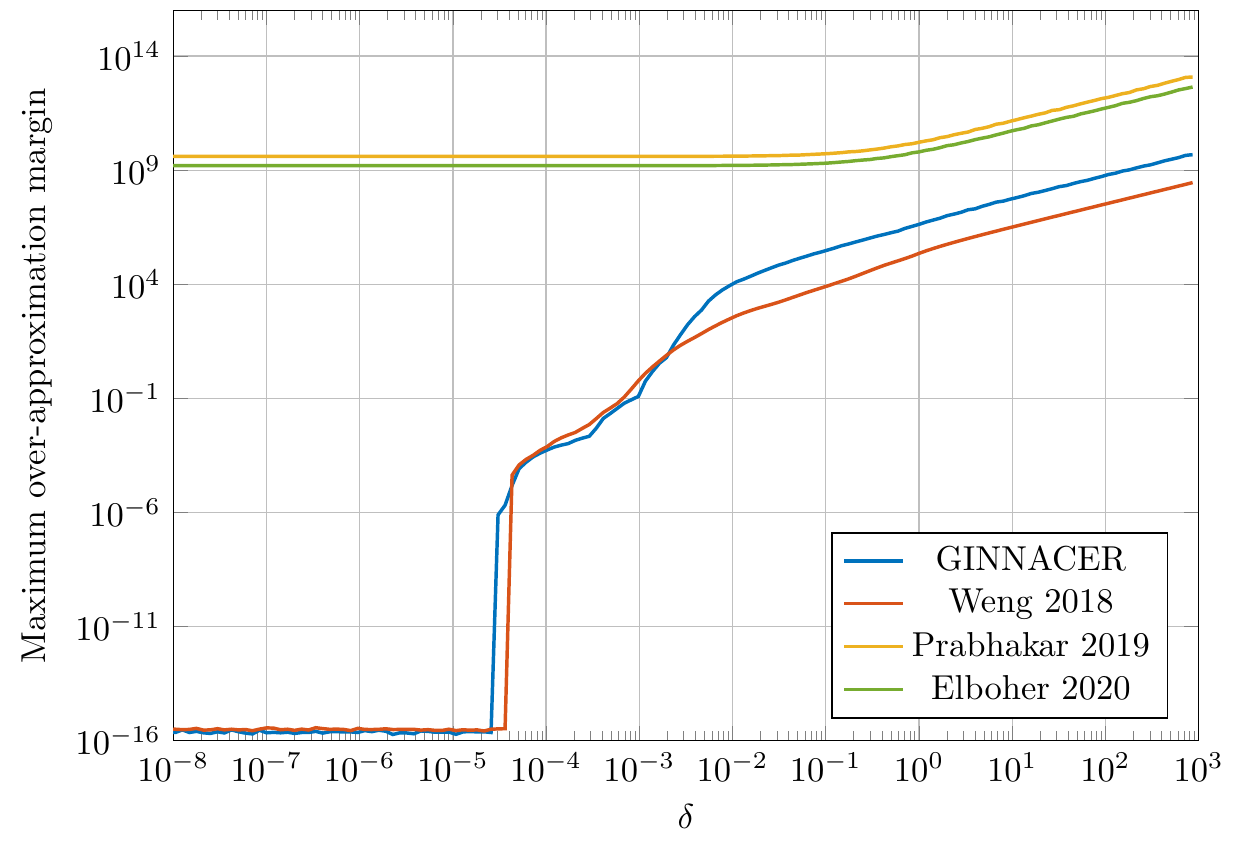}
        \caption{AcasXu (1-1)}
        \label{fig:acas_xu_dist}
    \end{subfigure}
    \begin{subfigure}[b]{0.49\textwidth}
    \centering
        \includegraphics[width=\textwidth]{./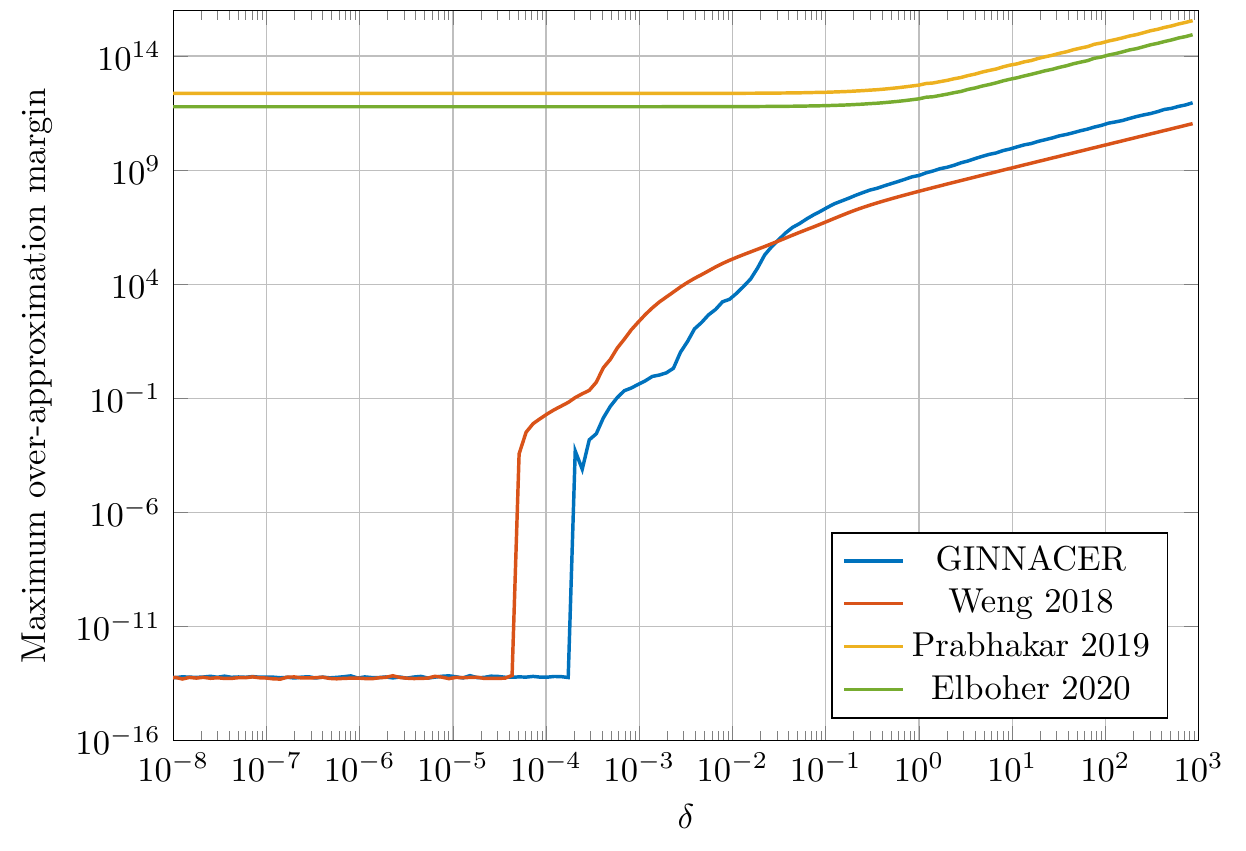}
        \caption{MNIST (fc)}
        \label{fig:mnist_fc_dist}
    \end{subfigure}
    \begin{subfigure}[b]{0.49\textwidth}
    \centering
        \includegraphics[width=\textwidth]{./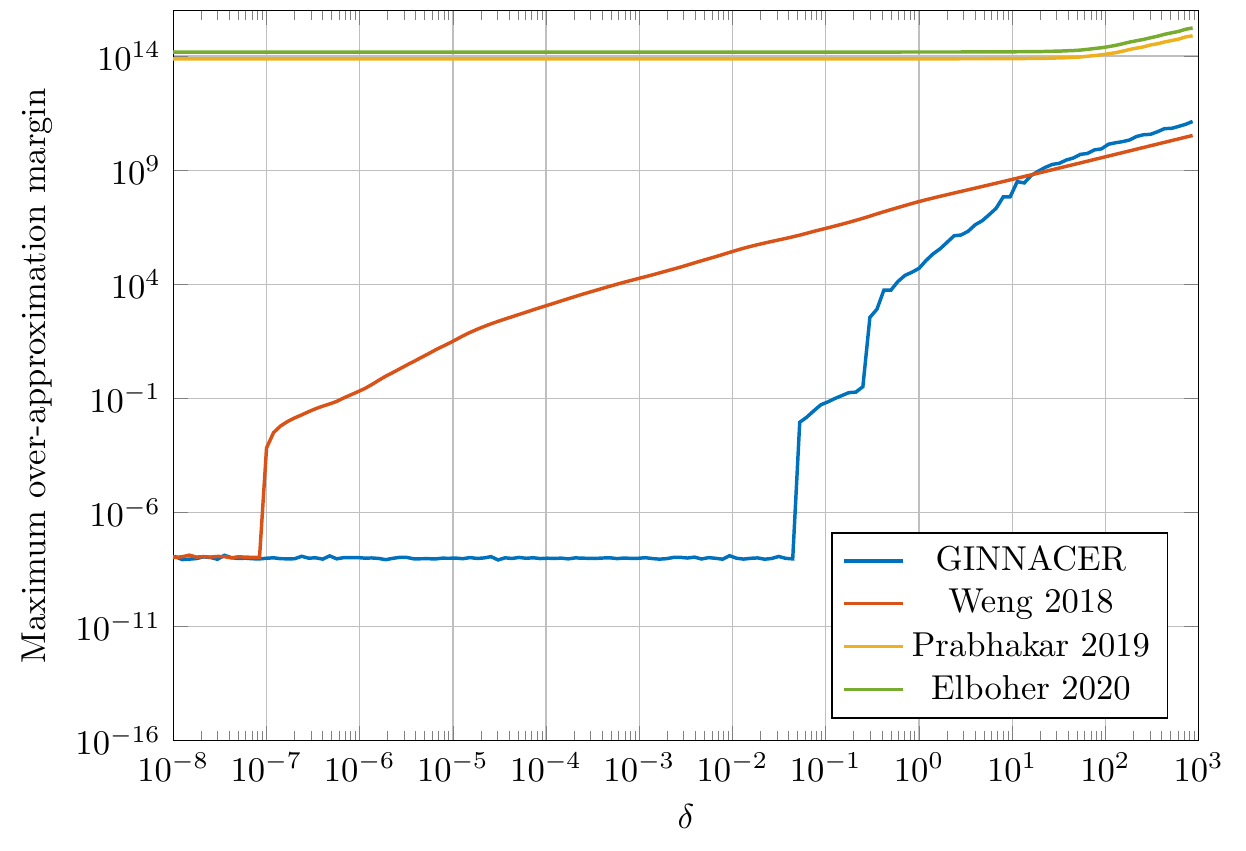}
        \caption{ToyADMOS}
        \label{fig:toy_admos_dist}
    \end{subfigure}
    \begin{subfigure}[b]{0.49\textwidth}
    \centering
        \includegraphics[width=\textwidth]{./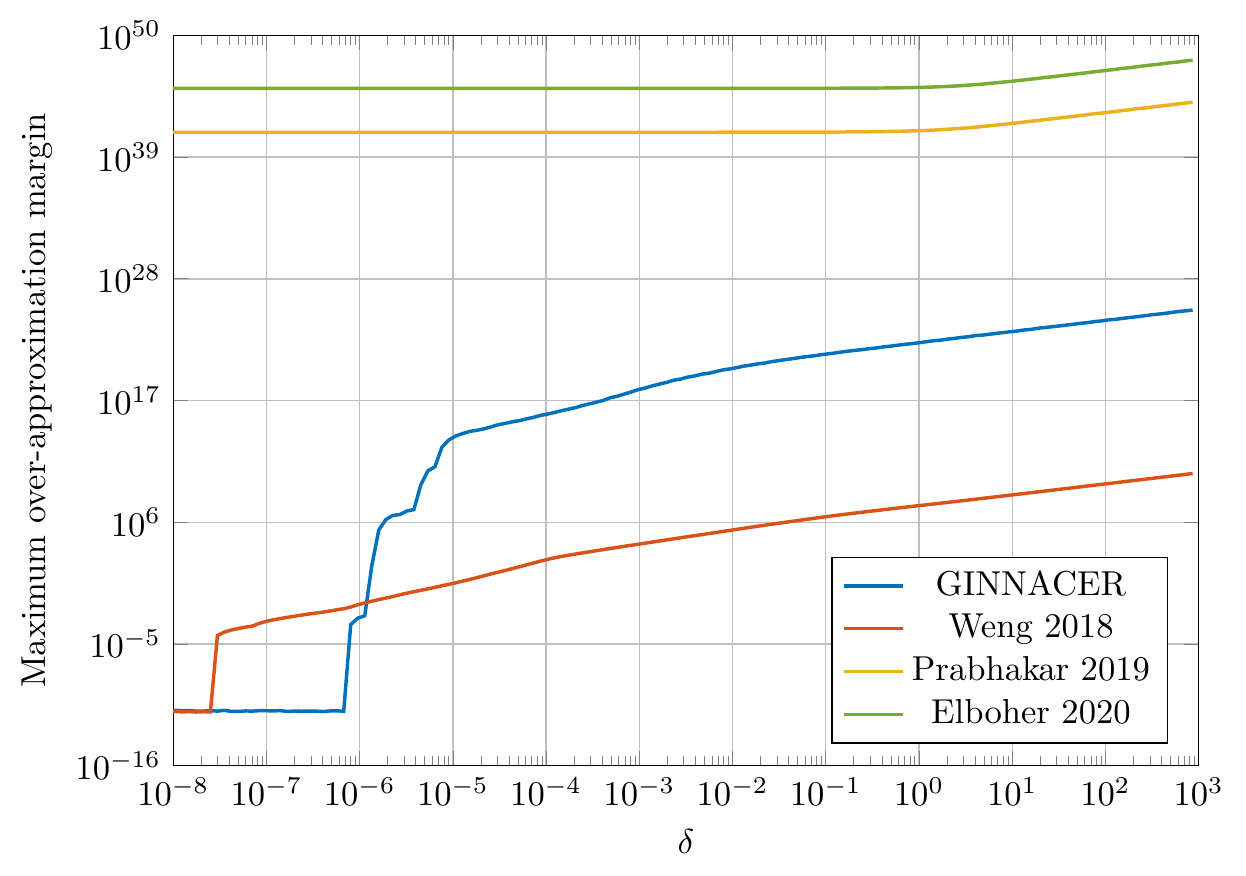}
        \caption{Visual Wake Words}
        \label{fig:vww_96_dist}
    \end{subfigure}
\caption{Comparison between GINNACER and state-of-the-art abstraction algorithms. Note that we use the tightest FastLin abstraction \citep{Weng2018} for each $\delta$.}
\label{fig:distance_comp}
\end{figure}

The results in Figure \ref{fig:distance_comp} complement the qualitative example in Figure \ref{fig:abs_comp}. Namely, we see that the existing global abstraction algorithms \citep{Prabhakar2019,Elboher2020} produce extremely large over-approximation margins. In contrast, GINNACER is on par with the local FastLin algorithm \cite{Weng2018}, while producing better over-approximation margins for smaller deviations from the centroid $c$. On the larger layers of the Visual Wake Words network and large deviations $\delta$, the FastLin algorithm achieves the smallest over-approximation margins. However, note that all three global abstraction algorithms use the same abstraction for all values of $\delta$, whereas FastLin is specifically recomputed for each $\delta$ (implicitly making it a quadratic abstraction).

Finally, the experiments of Figure \ref{fig:distance_comp} show the over-approximation margin of GINNACER when all layers are abstracted. For the sake of completeness, in \ref{sec:tradeoff} we show what happens when we only abstract a subset of the layers.

\subsection{Time Comparison}
\label{sec:time_comparison}

In this section, we measure the time required to compute each abstraction and use it at inference time. For the sake of clarity, we use the same experimental setting of our comparison in Section \ref{sec:empirical_comparison}. The only differences are the following: we recompute the abstraction 13 times to collect dispersion statistics, and independently measure the inference time 1000 times for each abstraction run. Care was taken to reduce the measurement noise to the minimum.\footnote{The experiments were run on a CentOS Linux 7 server with two Intel(R) Xeon(R) CPU E5-2620 v4 @ 2.10GHz, which carry 8 physical cores each. In order to avoid context switching, we used no more than 15 cores at the same time.}

\begin{figure}[p]
\centering
    \begin{subfigure}[b]{0.49\textwidth}
    \centering
        \includegraphics[width=\textwidth]{./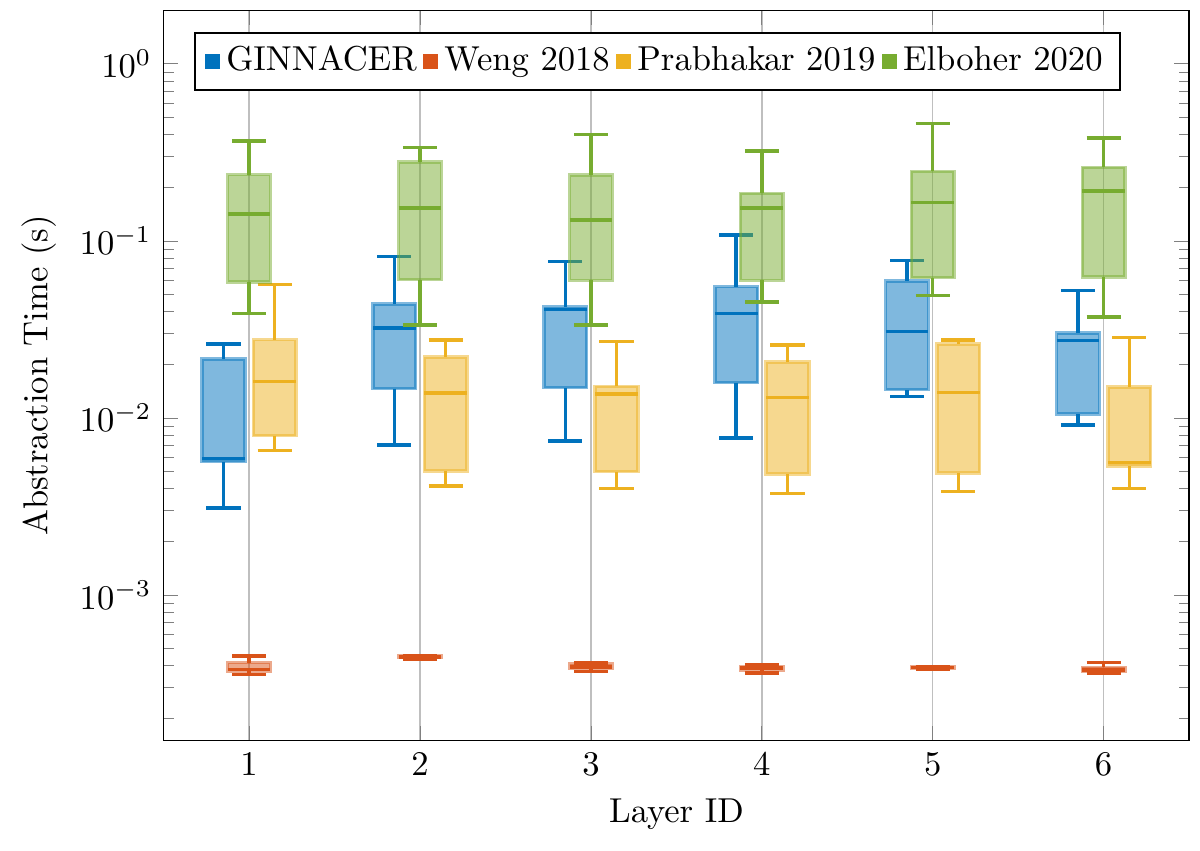}
        \caption{AcasXu (1-1)}
        \label{fig:time_acasxu}
    \end{subfigure}
    \begin{subfigure}[b]{0.49\textwidth}
    \centering
        \includegraphics[width=\textwidth]{./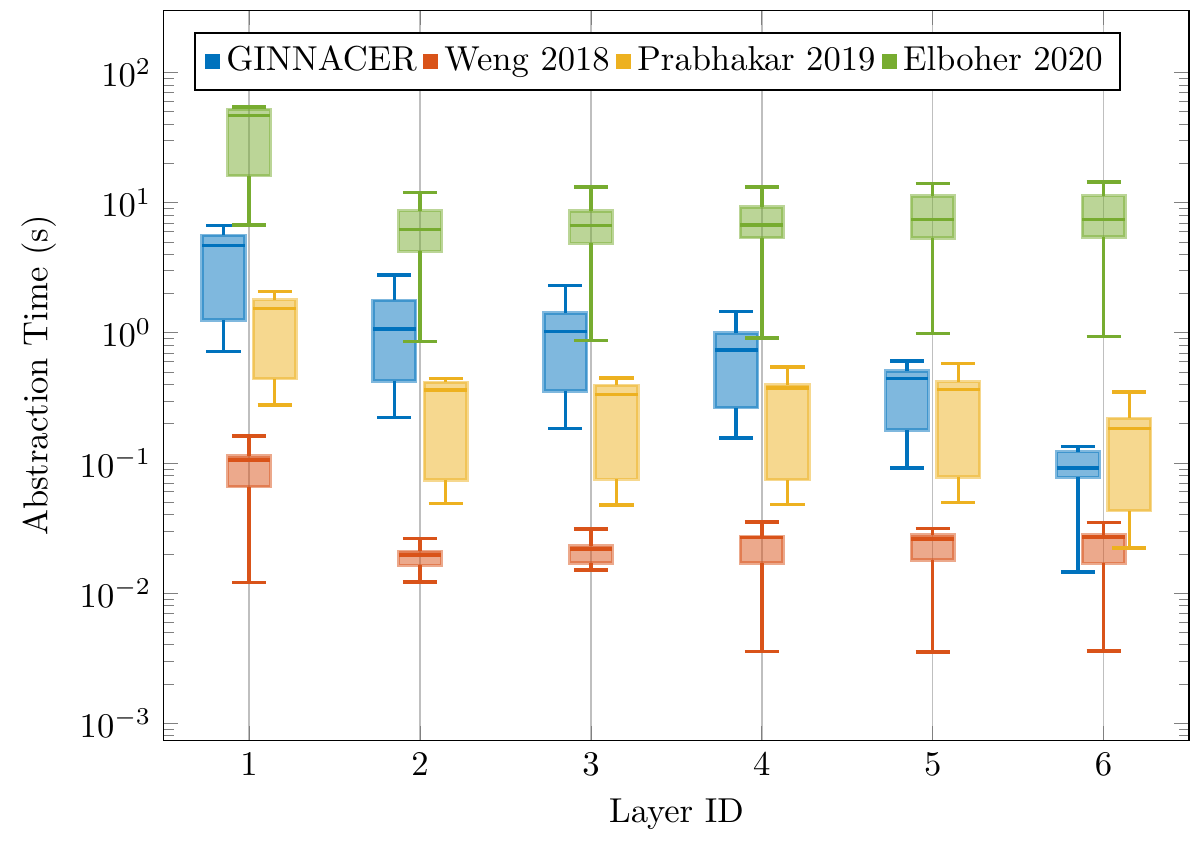}
        \caption{MNIST (fc)}
        \label{fig:time_mnist_fc}
    \end{subfigure}
    \begin{subfigure}[b]{0.62\textwidth}
    \centering
        \includegraphics[width=\textwidth]{./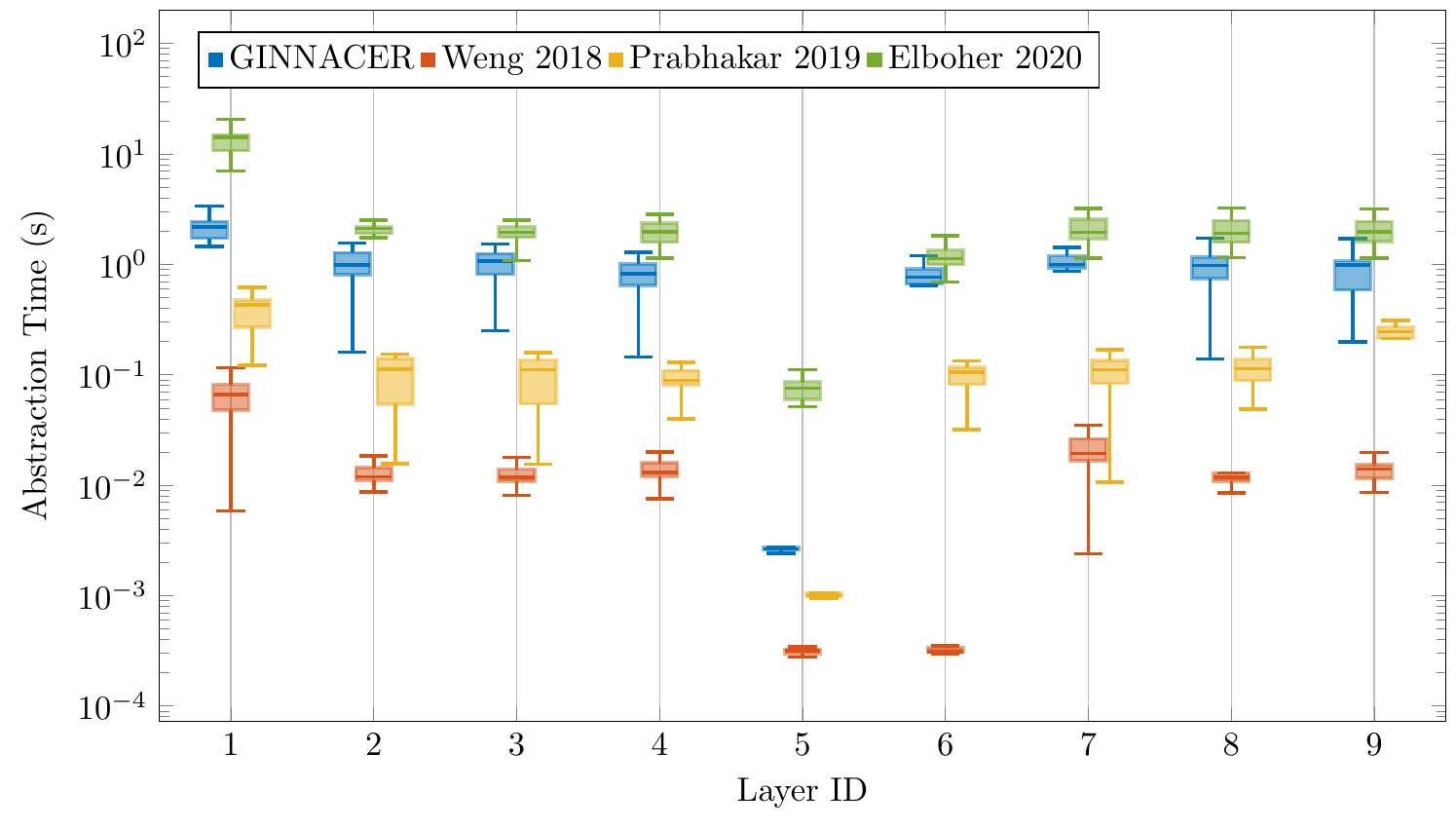}
        \caption{ToyADMOS}
        \label{fig:time_anomaly}
    \end{subfigure}
    \begin{subfigure}[b]{0.98\textwidth}
    \centering
        \includegraphics[width=\textwidth]{./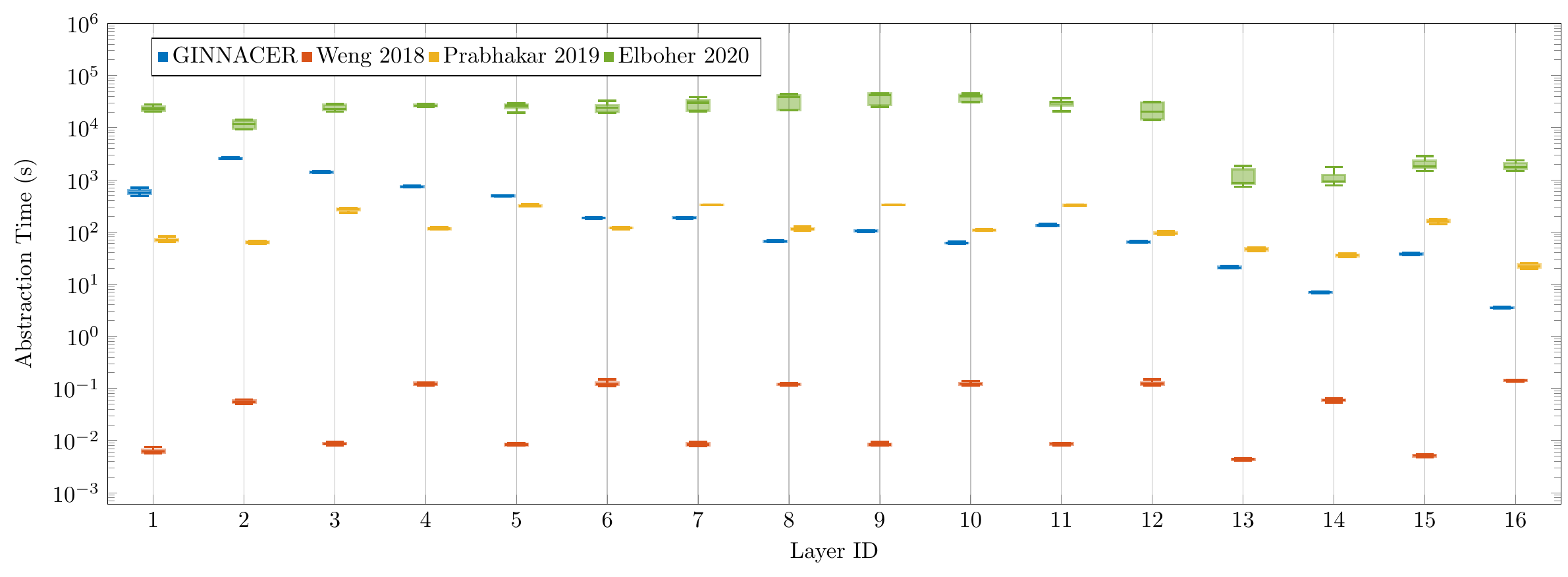}
        \caption{Visual Wake Words}
        \label{fig:time_vww}
    \end{subfigure}
\caption{Abstraction time over all network layers in our benchmark suite. The dispersion statistics (first, second and third quartile, $\pm1.5IQR$) are computed over 13 runs.}
\label{fig:compression_time}
\end{figure}

The abstraction time measurements are summarised in Figure \ref{fig:compression_time}. There, we report several dispersion statistics: namely, each box represent the first, second (median) and third quartile, whereas the whiskers represent the closest data points to $\pm1.5IQR$.\footnote{See \texttt{pgfplots} manual, Section 5.12, Page 506 at \url{https://ctan.org/pkg/pgfplots}} We separate the result per each layer of the neural networks, as their size varies across our experimental suite.

The results show a spread of the abstraction time across several orders of magnitude. Overall, the local FastLin algorithm is the fastest. This is because it performs operations for each individual neuron independently. In contrast, the three global abstraction algorithms need to reason about pairs of neurons, which slows down their abstraction time considerably. Among these three, GINNACER is roughly the second fastest one, after the Interval Networks of \citep{Prabhakar2019}. This empirical result reflects the theoretical time complexity analysis in \ref{sec:partioning}.

\begin{table}[t]
\centering
    \resizebox{\linewidth}{!}{
    \begin{tabular}{ |c|c|c|c|c| } 
        \hline
        & AcasXu (1-1) & MNIST (fc) & ToyADMOS & VisualWW \\
        \hline
        Original & \SI{9.86e-05}{\second}
                 & \SI{4.78e-03}{\second}
                 & \SI{5.50e-03}{\second}
                 & \SI{1.25e-02}{\second} \\
        GINNACER & \SI{7.75e-04}{\second}
                 & \SI{2.07e-02}{\second}
                 & \SI{2.69e-02}{\second}
                 & \SI{1.37e-02}{\second} \\
        Weng 2018 & \SI{3.68e-04}{\second}
                  & \SI{1.65e-02}{\second}
                  & \SI{1.46e-02}{\second}
                  & \SI{5.23e-02}{\second} \\
        Prabhakar 2019 & \SI{7.59e-04}{\second}
                       & \SI{6.66e-02}{\second}
                       & \SI{5.85e-02}{\second}
                       & \SI{2.46e-01}{\second} \\
        Elboher 2020 & \SI{1.09e-02}{\second}
                     & \SI{2.94e-02}{\second}
                     & \SI{3.10e-02}{\second}
                     & \SI{1.27e+00}{\second} \\
        \hline
    \end{tabular}}
\caption{Average execution time of the original and abstracted networks over 13K inference runs. All measurements below $1\times 10^{-1}\text{ s}$ have standard deviation in the same order of magnitude as their corresponding average.}
\label{tab:execution_time}
\end{table}

On a different note, we report the average inference times in Table \ref{tab:execution_time}. Note that these measurements have two limitations. First, the standard deviation is almost always in the same order of magnitude as the empirical average. This is because the inference time on these networks is practically instant, which adds a considerable amount of noise to our measurements. Second, our Python code was optimised to minimise abstraction time, not inference time. We believe faster inference times for all abstracted networks are possible.

At the same time, we can still notice a few general trends. On the one hand, abstracted neural network are slower than the original one since they require more computation to maintain valid bounds on the original variables. On the other hand, the abstraction in \cite{Elboher2020} appears to be the slowest, as it quadruples the number of neurons required per each layer.

\subsection{Effectiveness of our Partition Strategy}
\label{sec:empirical_clustering}

In this section, we quantify the ability of our GINNACER algorithm to reduce the number of ReLU non-linearities in the abstracted network. Recall that GINNACER has the requirement of exact reconstruction at the centroid (see Section \ref{sec:problem_statement}), which we satisfy by rejecting candidate subsets that yield an active ReLU at the centroid (see Section \ref{sec:clustering}). Consequently, GINNACER cannot cluster together all the neurons in a layer, like other global abstraction algorithms can do \citep{Prabhakar2019,Elboher2020}.

In our experiments, we run the GINNACER algorithm multiple times with different centroids: for the 45 AcasXu networks, we use the input centres of the ten safety properties used in the VNN-COMP'21 competition, whereas for MNIST and ToyADMOS we sample 1000 inputs from their respective datasets. Similarly, for Visual Wake Words we sample 1000 images from the COCO 2014 training set, equally split between containing a person or not.\footnote{\url{https://cocodataset.org/\#download}.}

\begin{figure}[t]
\centering
    \begin{subfigure}[b]{0.29\textwidth}
    \centering
        \includegraphics[width=\textwidth]{./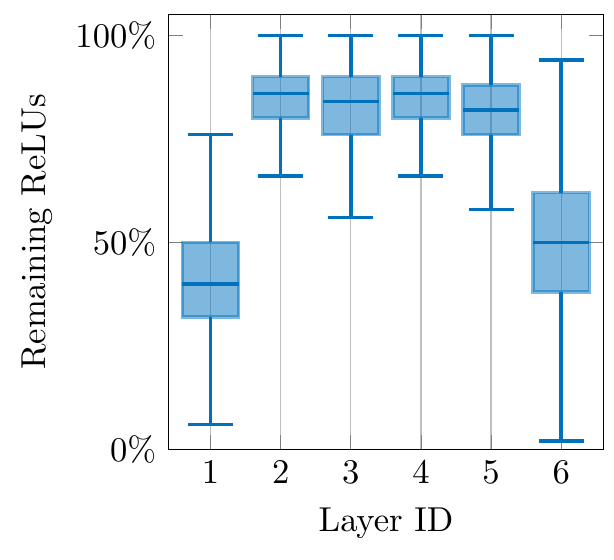}
        \caption{AcasXu (all)}
        \label{fig:bar_acasxu}
    \end{subfigure}
    \begin{subfigure}[b]{0.29\textwidth}
    \centering
        \includegraphics[width=\textwidth]{./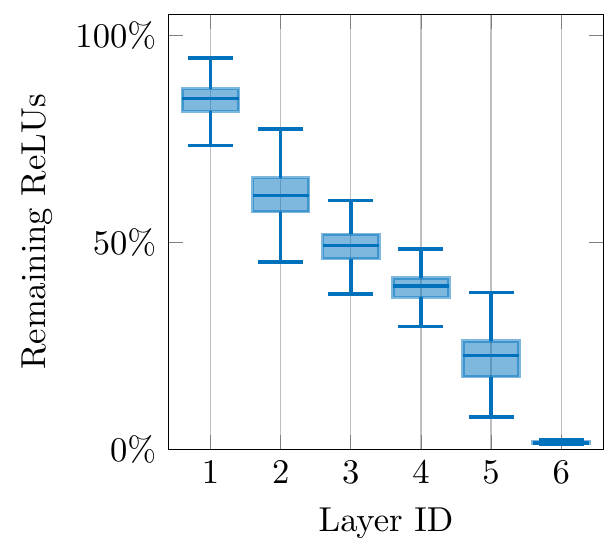}
        \caption{MNIST (fc)}
        \label{fig:bar_mnist_fc}
    \end{subfigure}
    \begin{subfigure}[b]{0.39\textwidth}
    \centering
        \includegraphics[width=\textwidth]{./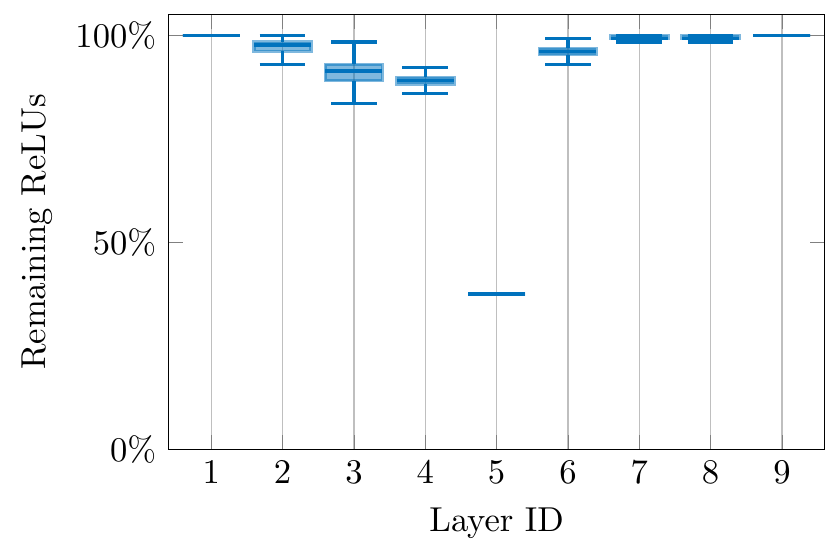}
        \caption{ToyADMOS}
        \label{fig:bar_anomaly}
    \end{subfigure}
    \begin{subfigure}[b]{0.66\textwidth}
    \centering
        \includegraphics[width=\textwidth]{./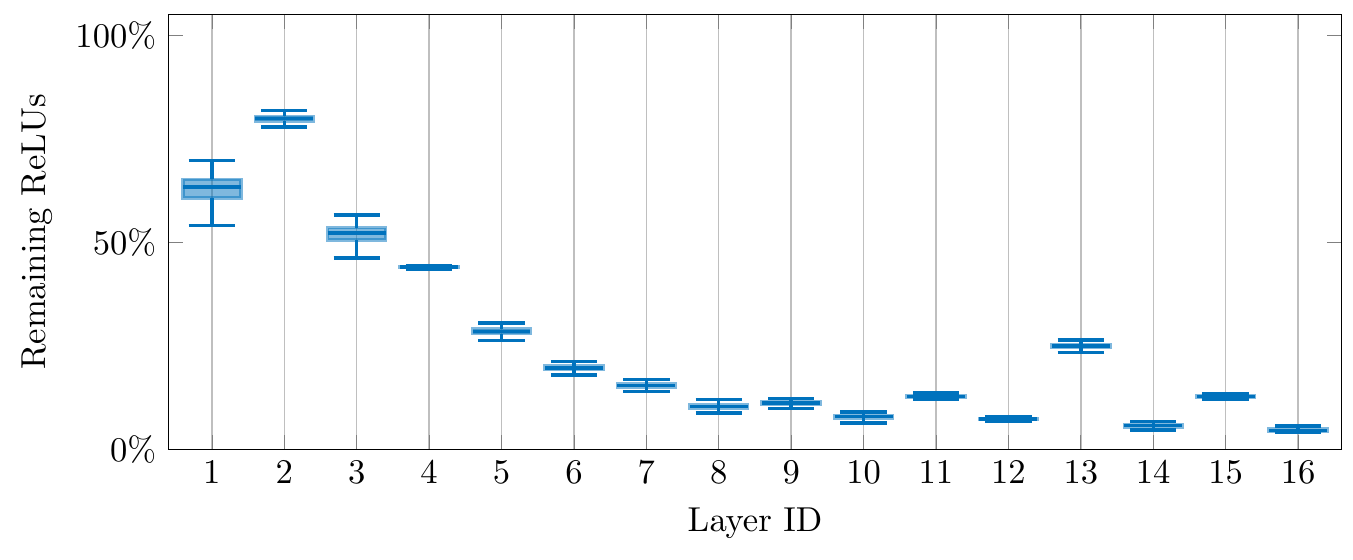}
        \caption{Visual Wake Words}
        \label{fig:bar_vww}
    \end{subfigure}
\caption{Percentage of ReLU activation functions remaining in the layers of our GINNACER abstractions, across multiple neural networks and input centroids.}
\label{fig:bar_compression}
\end{figure}

We report the results in Figure \ref{fig:bar_compression}, following the same box plot conventions of Figure \ref{fig:compression_time}. Note how the percentage of ReLUs remaining in the abstracted network varies greatly across different network architectures, layers and across input centroids. While it is difficult to discern general trends from our results, we speculate that ``bottleneck'' layers with high-dimensional and low-dimensional output yield fewer abstracted ReLUs. Examples of this phenomenon are the last layer of AcasXu networks (from 50 dimensions to 5), the last layer of the MNIST network (from 256 dimensions to 10) and the middle layer of the ToyADMOS autoencoder (from 128 dimensions to 8). Still, further analysis is required to understand whether this ``compressibility limit'' is a fundamental property of the network or just a consequence of our partition strategy.

Additionally, running GINNACER on the large layers of Visual Wake Words yields a higher reduction in the number of ReLU activation functions. We hypothesise that this is due to the sparsity of the weight matrices: each neuron in a convolutional neural layer is connected to a small subset of inputs only. As a result, the rank of the weight matrix is low, which allows for more aggressive partitioning of the neurons.

\section{Discussion}
\label{sec:discussion}

This paper proposes the GINNACER algorithm, which produces global abstractions of feedforward ReLU networks with local reconstruction guarantees. Here, we summarize the advantages of GINNACER over other global abstractions, outline our thoughts on its potential applications and propose a roadmap for future work.

\subsection{Advantages of GINNACER}
\label{sec:remarks}

The main novelty of GINNACER with respect to the other existing global abstractions methods \cite{Prabhakar2019,Elboher2020} is its center-exact reconstruction requirement. This requirement has two effects. On the one hand, it guides the abstraction procedure in introducing an uneven amount of over-approximation, which is concentrated away from the centroid. On the other hand, it constitutes a crucial piece of information to ``align'' all the abstracted layers, and make sure that the local pocket of zero over-approximation around the centroid is preserved throughout the whole neural network. The result of these design choices is an algorithm that is several orders of magnitude tighter than the other global abstraction methods \cite{Prabhakar2019,Elboher2020} \textit{even far away from the centroid}. In this respect, the main theoretical conclusion of our work is that ``aligning'' the over-approximation margin across different layers yields a global net benefit, not just a local one.

\subsection{Potential Applications}
\label{sec:applications}

Our contribution focuses on the problem of abstracting neural networks in isolation, i.e. without considering the potential applications of such abstractions. At the same time, we believe global neural network abstractions will soon find use in the following three areas.

First, neural network abstractions are routinely employed for verification purposes~\citep{Gehr2018,Liu2021}. In this respect, abstractions help neural network verifiers by simplifying the problem at hand while guaranteeing \textit{soundness}: if the abstracted network is safe, the original network is also safe. On the downside, verifying the abstracted network may lead to spurious counterexamples, as the verification problem is now \textit{incomplete}. For this reason, neural network verifiers often implement a so-called CounterExample-Guided Abstraction Refinement (CEGAR) iterative process~\citep{Katz2019,Bak2020}. Every time the verifier returns a spurious counterexample, the abstraction is refined to eliminate it, and the verification process is restarted. While most state-of-the-art verifiers use local abstraction techniques~\citep{Salman2019}, attempts at introducing global abstractions into a CEGAR loop have been made \citep{Elboher2020}. For the latter approach to become mainstream, we believe better global abstractions are needed. GINNACER is our contribution to such a research effort.

Second, neural network abstraction have been recently applied to control system problems~\cite{Sadeghzadeh2022,Newton2022,Adams2022}. In particular, abstractions with quadratic constraints are used to analyze the stability~\cite{Newton2022} and reachability~\cite{Sadeghzadeh2022} of closed-loop dynamic systems with neural controllers. Unfortunately, these quadratic abstractions are both conservative and local. We believe that GINNACER-style global abstractions can improve the design methodologies of control systems with neural networks by providing less conservative over-approximations.

Third, the connection between compression algorithms and neural network robustness has already caught the attention of safety researchers~\citep{Ye2019}. Notably, all three main compression techniques positively affect a neural network's robustness. That is, many forms of distillation~\citep{Fan2019}, pruning \citep{Ye2019,Jordao2021} and quantisation \citep{Lin2018b} reduce the density of adversarial perturbation attacks on the compressed network. Such a phenomenon has been observed empirically, but we still lack a solid theoretical explanation of it. We believe that research on global abstractions like GINNACER, which is fundamentally a compression technique with formal approximation guarantees, can provide the tools to build such a theoretical explanation.

\subsection{Future Work}
\label{sec:future_work}

While GINNACER is an essential step towards tackling these research challenges, several limitations remain. First, our algorithm is only applicable to ReLU networks so far. We believe extending it to general activation functions is possible, like the work of Paulsen and Wang~\citep{Paulsen2022} for local abstractions. Second, the centre-exact requirement of our algorithm limits the number of activation functions that can be removed, as opposed to the work of Prabhakar and Afzal~\citep{Prabhakar2019} and Elboher \textit{et al.}~\citep{Elboher2020}. We believe there is ample scope for improvement in this area. Finally, an ideal global abstraction would only produce a small over-approximation margin. While GINNACER improves the state-of-the-art by several orders of magnitude, it yields small margins only close to the centroid. We plan to address these limitations in our future work.

\section*{Acknowledgements}
\label{sec:acknowledge}

This work is funded by the EPSRC grant EP/T026995/1 entitled ``EnnCore: End-to-End Conceptual Guarding of Neural Architectures'' under \textit{Security for all in an AI enabled society}.

\bibliography{references_nnj}

\appendix

\section{Neuron Partition Strategy}
\label{sec:partioning}

The soundness of a global abstraction like the ones in~\cite{Prabhakar2019, Elboher2020} is agnostic to the choice of neurons we merge. In GINNACER, we add the extra requirement that the merged potentials are still negative at the centroid (see Definition~\ref{def:valid_subset}). At the same time, the \textit{quality} of the abstraction depends on the merging choices we make: intuitively, if we merge similar neurons together, we get a smaller over-approximation margin.

The existing literature approaches this problem in diametrically opposite ways. On the one hand, \cite{Elboher2020} settle on a greedy merging algorithm, that considers all possible pair of neurons and iteratively merges the one that causes the smallest error. Unfortunately, this greedy strategy becomes expensive as the number of neurons in a layer grows, as we show in our experiments in Section~\ref{sec:time_comparison}. On the other hand, \cite{Prabhakar2019} employs a stochastic merging algorithm, where a random pair of neurons is merged at each step. As our experiment in Section~\ref{sec:time_comparison} shows, this strategy is faster.

\begin{algorithm}[t]
\caption{Valid Neuron Partition}
\label{alg:neuron_partition}
\begin{algorithmic}[1]
\Require canonical layer weights $S^{\ell}W^{\ell}$ and biases $S^{\ell}b^{\ell}$, layer centroid $x^{\ell-1}_c$.
\Ensure valid neuron partition $D\equiv\{D_1,\dots,D_h\}$.
\State $D_i\gets\{i\},\forall i\in[1,n^{\ell}]$
\State $D\gets\{D_1,\dots,D_{n^{\ell}}\}$
\ForAll{$i\in[1,n^{\ell}-1]$}
    \ForAll{$j\in[i+1,n^{\ell}]$}
        \If{$D_i,D_j\in D$}
            \State $D_{ij}\gets D_i\cup D_j$
            \State $(V_{D_{ij}})_s\gets\max_{r\in D_{ij}}(S^{\ell}W^{\ell})_{rs},\forall s\in[1,n^{\ell-1}]$\hfill\Comment{Definition \ref{def:upper_bound}}
            \State $u_{D_{ij}}\gets\max_{r\in D_{ij}}(S^{\ell}b^{\ell})_r$\hfill\Comment{Definition \ref{def:upper_bound}}
            \State $\hat{r}_{D_{ij}}\gets(V_{D_{ij}})^Tx^{\ell-1}_c+u_{D_{ij}}$
            \If{$\hat{r}_{D_{ij}}\leq0$}\hfill\Comment{Definition \ref{def:valid_subset}}
                \State $D_i\gets D_{ij}$
                \State $D\gets D\setminus D_j$
            \EndIf
        \EndIf
    \EndFor
\EndFor
\end{algorithmic}
\end{algorithm}

Against this background, we propose to use the neuron partitioning strategy listed in Algorithm~\ref{alg:neuron_partition}. Similarly to the random merging algorithm in~\cite{Prabhakar2019}, our partitioning strategy does not try to find the best pair of neurons to merge. Instead, it iteratively merges any pair of neurons that result in a valid subset according to Definition~\ref{def:valid_subset}. More specifically, we start with a trivial partition where each neuron belongs to its own subset (Line 1). Then, for each remaining pair of subsets $D_i$ and $D_j$ (Line 5), we check whether merging them yields a negative potential at the centroid (Line 10). If so, we overwrite $D_i$ with its union with $D_j$ (Line 11), and remove $D_j$ from the partition (Line 12). The algorithm stops when no more subsets can be merged.

In Section~\ref{sec:algo_complexity} we derive the time complexity of Algorithm~\ref{alg:neuron_partition}. There, we show that it dominates the complexity of computing the GINNACER abstraction of any layer. These findings mirror our empirical results from Section~\ref{sec:time_comparison}.

\section{Abstraction and Over-Approximation Trade-Off}
\label{sec:tradeoff}

An abstraction is a compromise between a faithful representation of the original model, or a succinct one~\cite{Cousot1992}. GINNACER is always precise for inputs close to the centroid $x_c^{-1}$, but introduces a varying amount of over-approximation elsewhere. This over-approximation depends both on the distance to the centroid, and the number of ReLUs activation functions we remove from the original network.

In Section~\ref{sec:experiments}, we always show the behaviour of GINNACER in ``maximum compression'' mode. That is when we employ the partition strategy in~\ref{sec:partioning} to remove as many ReLUs as possible from the original network. However, GINNACER can be also used to remove a smaller number of ReLUs. Here, we illustrate the effect of such a strategy on the over-approximation margin.

\begin{figure}[t]
\centering
    \begin{subfigure}[b]{0.49\textwidth}
    \centering
        \includegraphics[width=\textwidth]{./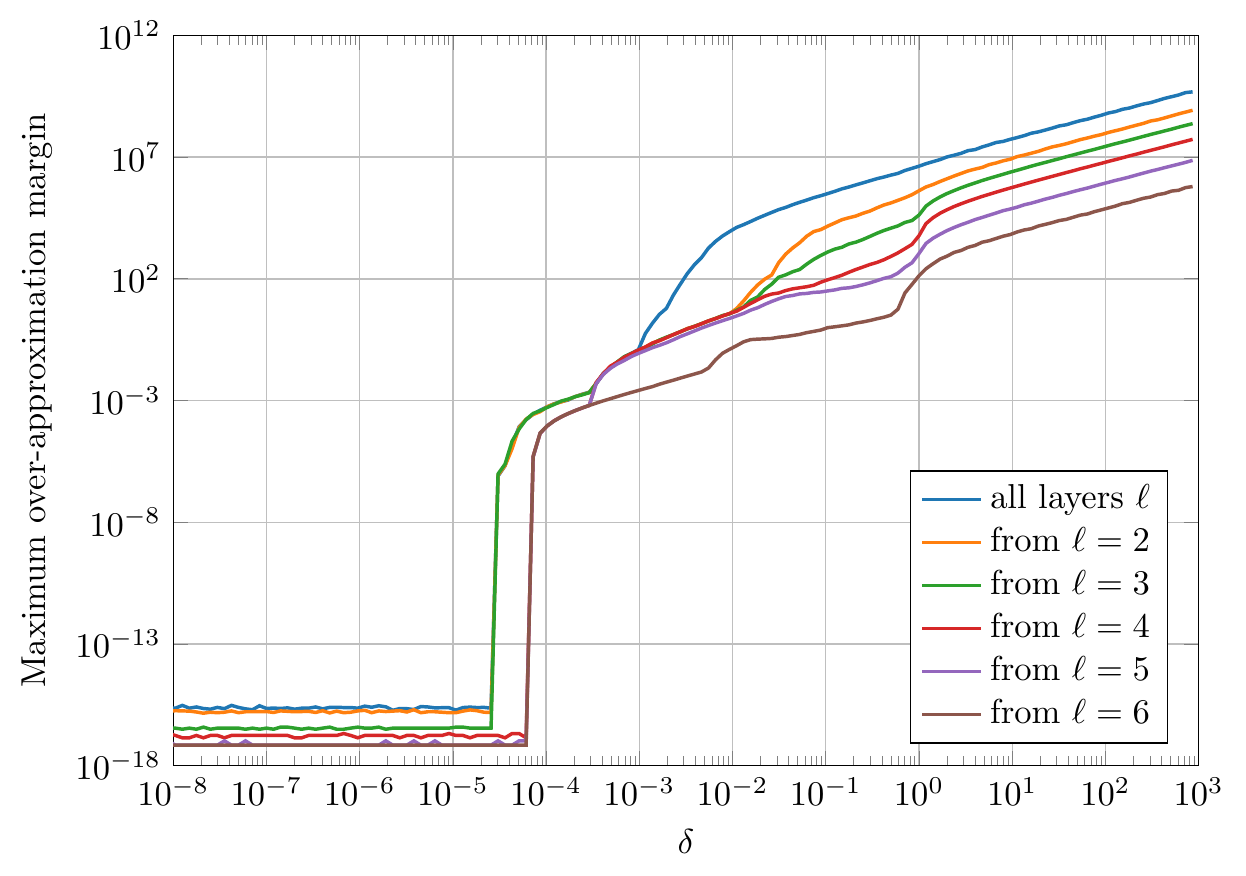}
        \caption{AcasXu (1-1)}
        \label{fig:acas_xu_tradeoff}
    \end{subfigure}
    \begin{subfigure}[b]{0.49\textwidth}
    \centering
        \includegraphics[width=\textwidth]{./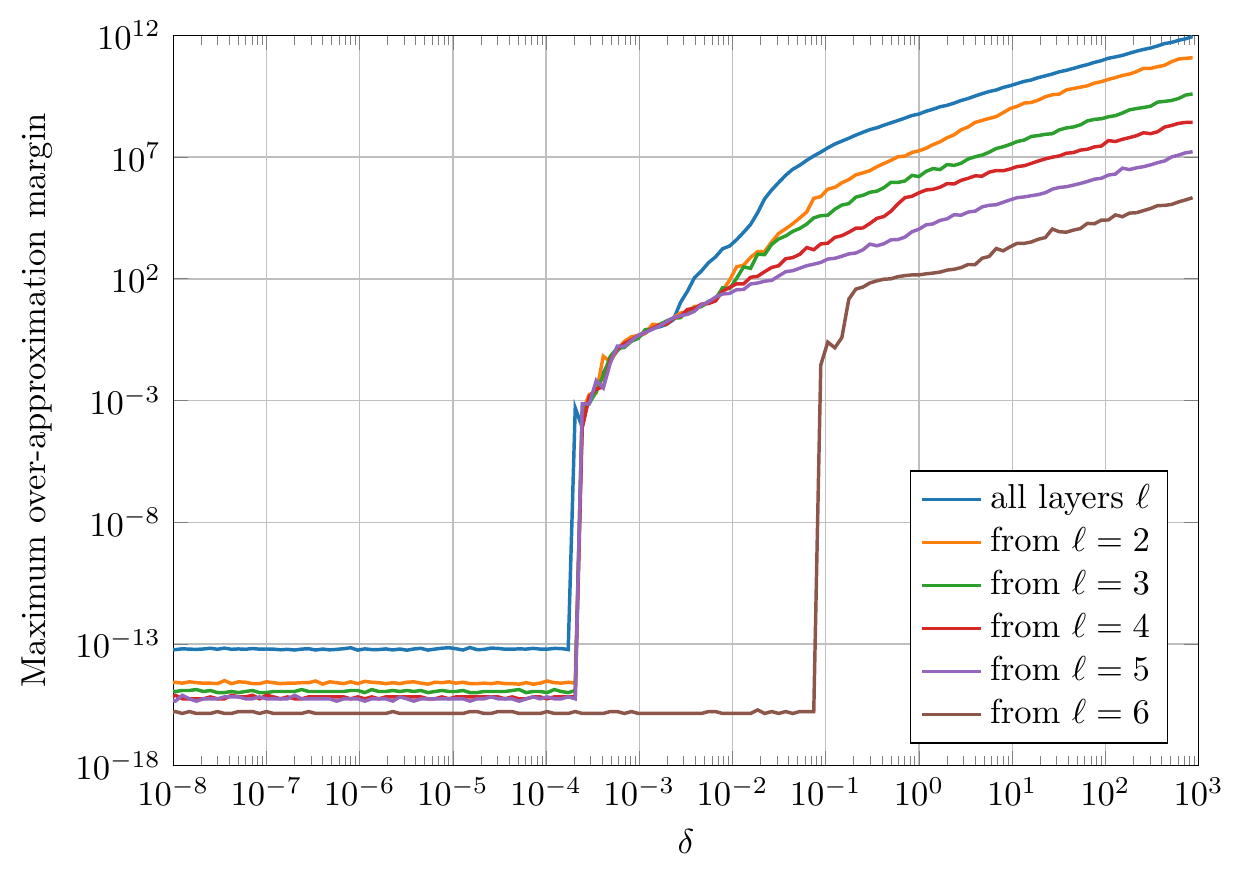}
        \caption{MNIST (fc)}
        \label{fig:mnist_fc_tradeoff}
    \end{subfigure}
    \begin{subfigure}[b]{0.49\textwidth}
    \centering
        \includegraphics[width=\textwidth]{./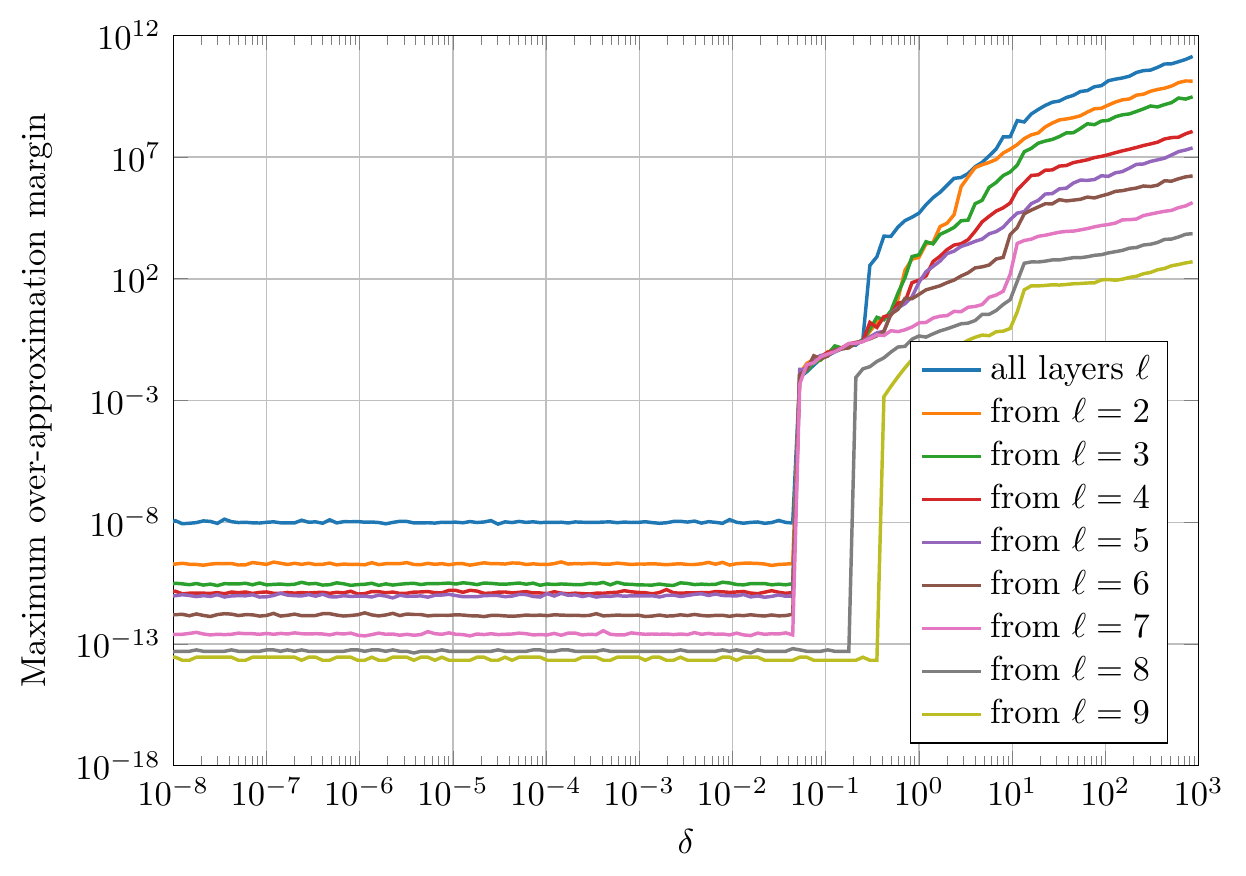}
        \caption{ToyADMOS}
        \label{fig:toy_admos_tradeoff}
    \end{subfigure}
    \begin{subfigure}[b]{0.49\textwidth}
    \centering
        \includegraphics[width=\textwidth]{./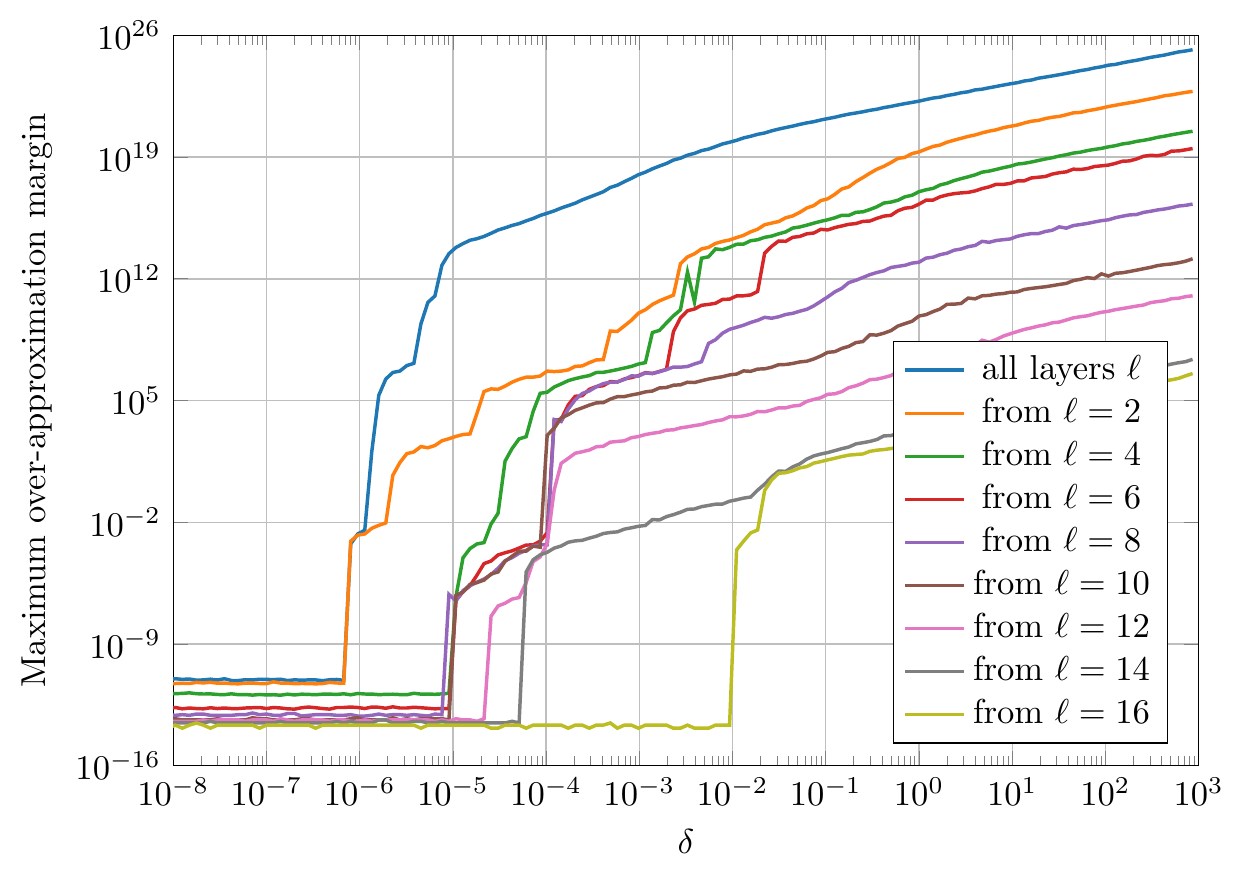}
        \caption{Visual Wake Words}
        \label{fig:vww_96_tradeoff}
    \end{subfigure}
\caption{Comparison between GINNACER abstractions with different settings. The ``all layers'' GINNACER abstraction is the same as the one presented in Figure \ref{fig:distance_comp}. The others abstract progressively fewer layers, starting from layer $\ell=2$ onward. Abstracting fewer layers yields smaller over-approximation margins.}
\label{fig:tradeoff_comp}
\end{figure}

Figure \ref{fig:tradeoff_comp} shows the maximum over-approximation margin of GINNACER for any input $x$ with distance $||x-x_c^{-1}||_{\infty}=\delta$ from the centroid $x_c^{-1}$. More specifically, the experiments were run under the same settings of our over-approximation comparison in Section~\ref{sec:empirical_comparison}. Accordingly, the line labeled ``all layers'' is identical to the maximum compression setting for GINNACER in Figure~\ref{fig:distance_comp}. In contrast, the other lines represent GINNACER abstractions where the first layer(s) are left untouched, and the ReLUs are removed from all the remaining layers.

The results in Figure~\ref{fig:tradeoff_comp} show a monotonic relationship between abstraction and over-approximation margin. This is expected since removing more ReLUs requires the upper and lower bounds in the abstraction to grow further apart. Note that a few non-monotonic data points exist in Figure~\ref{fig:tradeoff_comp}. We attribute them to the stochastic nature of our experiment, where the maximum over-approximation margin for a given $\delta$ is computed by independently sampling $10000$ random inputs on the surface of a $\delta$-sized hypercube.

\end{document}